\documentclass{article}
\usepackage{preprint,times}
\usepackage{amsmath,amsfonts,bm}

\def\eqref#1{equation~\ref{#1}}
\def\1{\bm{1}}

\DeclareMathAlphabet{\mathsfit}{\encodingdefault}{\sfdefault}{m}{sl}
\SetMathAlphabet{\mathsfit}{bold}{\encodingdefault}{\sfdefault}{bx}{n}

\usepackage{hyperref}
\usepackage{url}
\usepackage{amsmath,amssymb,amsthm}
\usepackage{booktabs}
\usepackage{graphicx}
\usepackage{placeins}
\usepackage{xcolor}
\usepackage{tikz}
\usetikzlibrary{calc}

\definecolor{nullamber}{HTML}{B26A00}

\newtheorem{theorem}{Theorem}
\newtheorem{proposition}{Proposition}
\newtheorem{corollary}{Corollary}
\newtheorem{definition}{Definition}
\newtheorem{lemma}{Lemma}

\title{The Lattice of Transition Laws}

\author{T. Y. Tsui, Jiatao Gu, Lingjie Liu \\
University of Pennsylvania}

\begin{document}

\maketitle

\begin{abstract}
Diffusion and autoregression (AR) have long been seen as different categories of generative models, with diffusion specialising in continuous fields and AR specialising in discrete tokens. Recent work seeks to combine the advantages of the two models, and each hybrid fixes its decoding schedule by design. In this paper, we ask whether the performance of decoding schedules of one model can be predicted before decoding at a fixed number of steps. We describe diffusion, AR, and models in between as paths on one corruption lattice, and define the cost of a schedule as the dependence its parallel steps discard. The cost shows that the fewest steps of a zero-cost schedule are set by the geometry of the data, in the same way for tokens and for continuous fields. In particular, for data that are Markov on a graph and dependent along its paths, the fewest steps equal the graph's treedepth, which is logarithmic in the length of a sequence and linear in the side length of a grid. With fewer steps than the treedepth, every schedule pays a positive cost, whose ranking we predict before decoding with a kernel of pairwise dependence estimated from pretrained weights. Across text generation, image generation, and video generation, we verify most of the predictions about the rankings of different schedules under different metrics and benchmarks. This work therefore provides a design principle for decoding for future AR models, diffusion models, and anything in between. Our code is available at https://github.com/TSUITUENYUE/The-Lattice-of-Transition-Laws.
\end{abstract}

\section{Introduction}

Autoregression, the default method for discrete tokens, generates
each coordinate conditioned on what is already revealed
\citep{vandenoord2016pixelrnn}. Diffusion, the default for continuous
fields, progressively removes noise from all coordinates together
\citep{ho2020ddpm, song2021sde}. Recent work combines their advantages in both domains. For continuous fields, coordinate-by-coordinate generation conditions each coordinate on the content
generated before it \citep{li2024mar} and lets a video grow to any length
\citep{chen2024diffusionforcing, skyreels2025}. For discrete tokens, parallel updates reduce the
one step per token required by autoregression \citep{nie2025llada}.

In existing hybrid models, the design of each model fixes how autoregression and diffusion are combined \citep{arriola2025bd3lm, tidar2025, sahoo2025esolm, wu2023ardiffusion, ruhe2024rolling, kim2024fifo}. However, their measured quality reports only the schedule each model fixes. The schedule matters: with the same weights and the same few-step budget, the image quality of MAR varies by an order of magnitude across reveal orders (Table~\ref{tab:mar}). Yet we lack a design principle that decides the schedule for a given model and data. A candidate schedule can only be judged after decoding with it. We therefore ask whether the performance of the decoding schedules of one model can be predicted before decoding.

Such a principle first needs a space that contains every schedule it can choose from and compare against. We assign a corruption level to each coordinate and let the levels vary independently, so that some coordinates can be clean while others are partly or fully corrupted. These assignments form the corruption lattice $L^d$. A decoding schedule then decides a path through the lattice: at each step, it chooses which coordinates to update and by how much (Figure~\ref{fig:hero}). 

A unifying space alone is not sufficient to be a design principle. Ranking different schedules needs a quantified cost. Earlier work also unifies the two families on discrete tokens: the training objective of masked diffusion is that of an any-order autoregressive model \citep{ou2025absorbing}, and a per-position noise schedule on tokens expresses autoregression as a special case of diffusion \citep{fathi2025hyperschedules}. Our lattice holds continuous fields as well, where Diffusion Forcing trains video with an independent noise level per frame, as next-token prediction with uncertainty \citep{chen2024diffusionforcing}. These prior works describe schedules, while our lattice adds a cost that compares them before decoding and predicts their performance. On our lattice, every step has an exact joint law for the coordinates it updates, and the cost of a parallel step is the dependence lost when each of them is drawn from its own conditional, which adds up along the path to the schedule's divergence from exact sampling (Section~\ref{sec:design}). For data that are Markov on a graph and dependent along its paths, the ideal zero-cost schedule follows an elimination tree of the graph, updating the coordinates at one depth of the tree per step. The least depth of such a tree, the treedepth, is a lower bound on the number of steps of any zero-cost schedule (Section~\ref{sec:optimal}).

With even fewer steps than the treedepth, every schedule will need to pay a cost. Two coordinates depend on each other less the farther apart they are, and a step that spreads its coordinates apart pays less than one that updates neighbours. Before decoding, we measure from a pretrained model how this dependence decreases with distance, as a kernel over pairs of positions, and use it to predict how the schedules of the model rank (Section~\ref{sec:choose}).

\begin{figure}[t]
\centering
\includegraphics[width=0.9\textwidth]{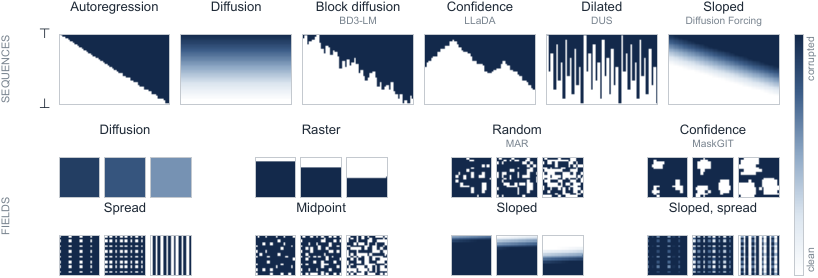}
\caption{Decoder schedules on the lattice. Top: columns are coordinates, rows are successive
states from $\top$ to $\bot$, and shades run from white (clean) to blue (corrupted).
Bottom: image-token schedules at 12.5\%, 25\% and 50\% of the path; the last two show noise gradients in
raster and spread order; Table~\ref{tab:rules} defines the rules.}
\label{fig:hero}
\end{figure}

In this paper, our contributions are:
\begin{itemize}\setlength{\itemsep}{1pt}
\item We place diffusion, autoregression and every schedule between them on one corruption lattice, and show that one set of weights can traverse it.
\item We define the dependence cost of a schedule, the dependence its parallel steps discard, show that it equals the divergence of the path from exact sampling, for both discrete tokens and continuous values, and derive that for data Markov on a graph and dependent along its paths the fewest zero-cost steps equal the graph's treedepth.
\item We predict how schedules rank before decoding from a kernel of pairwise dependence estimated from pretrained weights, and test the predictions on released language, image and video models.
\end{itemize}

Related work is discussed in Appendix~\ref{sec:related}.

\section{Decoding Schedules on the Corruption Lattice}
The space must hold every decoder as a sequence of transition laws, each conditioned on a partially corrupted state. We write diffusion and autoregression in this form first. The forward process adds Gaussian noise with
variance $a$, $y_a = x_0 + \sqrt{a}\,\varepsilon$, at levels $a_0 > \dots > a_B$. A transition between two adjacent levels requires the conditional mean
$\hat{x}_k(y) = \mathbb{E}[x_0 \mid y_{a_k} = y]$.  Sampling fresh noise around the exact conditional mean of $y_{k+1}$ gives the Gaussian law
\begin{equation}
\hat p(y_{k+1} \mid y_k) = \mathcal{N}\!\big(r\,y_k + (1-r)\,\hat{x}_k(y_k),\;
\tau_k I\big), \qquad \tau_k = a_{k+1}(a_k - a_{k+1})/a_k,
\label{eq:stochlaw}
\end{equation}
with $r = a_{k+1}/a_k$. Carrying the noise of $y_k$ forward gives
the deterministic law
\begin{equation}
y_{k+1} = \hat{x}_k + \sqrt{r}\,\big(y_k - \hat{x}_k\big).
\label{eq:diraclaw}
\end{equation}
Both diffusion laws use $\hat{x}_k$, with $a_k$ identifying the noise level of the conditioning state. Autoregression likewise specifies a transition law conditioned on a
partially corrupted state. In the next-token conditional
$p(x_i \mid x_{<i})$, generated coordinates are clean and the rest are fully corrupted, and each step takes one coordinate from fully corrupted to clean. The causal mask carries these level labels.

\subsection{One level per coordinate}

Let coordinate $i$ carry its own level $\ell_i \in L$, ordered from clean to fully corrupted: a noise variance in $[0, \infty]$ for continuous data, and for tokens the absorbing channel, whose $\ell \to \infty$ end is the mask. Deployed token decoders use only the two ends \citep{austin2021d3pm, sahoo2024mdlm, nie2025llada}; an interior level lets a step also choose how far to advance each coordinate (Appendix~\ref{app:graded}). Conditioning on a lattice point $\ell$ includes the corrupted values it carries, which we leave implicit.

\begin{definition}[Corruption lattice]
\label{def:lattice}
The corruption lattice is $L^d$, ordered coordinatewise. A decoding schedule is a monotone path from $\top$, every coordinate corrupted,
to $\bot$, every coordinate clean. A step from $\ell$ to
$\ell' \preceq \ell$ advances the coordinates
$S = \{i : \ell'_i < \ell_i\}$. Its transition law is
$p_{\ell\to\ell'}(x'_S \mid \ell)$, with $x'_S$ the updated
values at the destination levels $\ell'_S$. Coordinates outside $S$
retain their values.
\end{definition}
When the step or schedule fixes the destination levels, we write
$p(x_S \mid \ell)$ for the transition law, with $x_S$ the updated values. Entropies and mutual
information below use the same convention.

\begin{lemma}[Path representation]
\label{prop:paths}
Let $F$ be the family of transition laws $p_{\ell\to\ell'}$ between lattice points $\ell' \preceq \ell$. Every monotone path composes elements of $F$ into a law whose marginal on the clean coordinates is the data distribution, and two paths differ only in which elements of $F$ they evaluate.
\end{lemma}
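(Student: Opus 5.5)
The plan is to define the transition laws in $F$ as the exact conditionals of a single joint object, and then to show by telescoping that composing them along any monotone path reproduces that object's marginals. First fix the \emph{joint corruption process}. Draw $x_0\sim p_{\mathrm{data}}$. Give every coordinate its own forward channel: a Gaussian $y_{i,a}=x_{0,i}+\sqrt{a}\,\varepsilon_{i,a}$ with independent-increment noise for continuous data, or the absorbing channel for tokens. Because each channel is Markov in its own level, one coupling realises all levels at once. For every lattice point $\ell\in L^d$ this gives a well-defined joint law $q_\ell$ of the corrupted state $y_\ell=(y_{i,\ell_i})_i$. The coordinatewise channels also make the family consistent under the lattice order: if $\ell'\preceq\ell$, the state $y_\ell$ is obtained from $y_{\ell'}$ by further corrupting only the coordinates in $S=\{i:\ell'_i<\ell_i\}$, independently of everything else. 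Finally, define $p_{\ell\to\ell'}(x'_S\mid \ell)$ as the exact conditional $q(y_{\ell',S}\mid y_\ell)$. The two diffusion laws \eqref{eq:stochlaw} and the next-token conditional are exactly this object when $|S|=d$ or $|S|=1$, which justifies calling $F$ the family of transition laws.

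The core step is a one-step identity: if $y_\ell\sim q_\ell$ and we draw $x'_S\sim p_{\ell\to\ell'}(\cdot\mid y_\ell)$ while keeping the coordinates outside $S$, the result has law $q_{\ell'}$. One subtlety needs checking here. The coordinates outside $S$ sit at the same level in $\ell$ and in $\ell'$, so $y_{\ell',S^c}=y_{\ell,S^c}$ under the coupling, and the state "$y_{\ell,S^c}$ together with $y_{\ell',S}$" is literally $y_{\ell'}$. Then $q_\ell(y_\ell)\,q(y_{\ell',S}\mid y_\ell)$ is the joint law of $(y_\ell,y_{\ell'})$. Marginalising out $y_{\ell,S}$ leaves $q_{\ell'}$. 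This is just Bayes' rule applied to the joint law of the pair $(y_\ell, y_{\ell'})$.

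Next, compose along the path. Take a monotone path $\top=\ell^0\succeq\ell^1\succeq\dots\succeq\ell^T=\bot$, and start from $q_\top$, which is the data-independent prior: all masks, or pure noise in the limit $a\to\infty$. Induction on $t$ with the one-step identity gives the state law $q_{\ell^t}$ at every $t$. At $t=T$ this is $q_\bot=p_{\mathrm{data}}$, because a clean level is the identity channel. The composed law is the chain $q_\top\prod_t p_{\ell^{t-1}\to\ell^t}$, so the only path-dependent data are the pairs $(\ell^{t-1},\ell^t)$, which pick out which elements of $F$ are evaluated. This proves the second clause. The two paths share the prior and the target, and the intermediate marginals $q_{\ell^t}$ are all fixed by the single process.

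The main obstacle is the $\top$ endpoint for continuous data. Infinite variance is not a proper distribution, so I would handle it by truncation. Start at a large finite level $a_{\max}$, apply the argument exactly there, and note that the mismatch between $\mathcal N(0,a_{\max}I)$ and $q_{a_{\max}}$ vanishes in total variation as $a_{\max}\to\infty$. Composing Markov kernels does not increase total variation, so the error stays vanishing at $\bot$. A second point to state explicitly is that the deterministic law \eqref{eq:diraclaw} is not the exact conditional. It preserves marginals only in the probability-flow sense. For this reason I would phrase the lemma for the exact stochastic conditionals and remark that the deterministic variant also matches the marginals $q_{\ell^t}$, without reproducing the joint path law.
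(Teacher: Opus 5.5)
Your main argument is correct and is essentially the paper's. The exact reverse conditionals of one forward corruption process, composed from $\top$, carry the state law to $q_\bot=p_{\mathrm{data}}$, and a path enters only through the index pairs of its factors. You also make explicit what the paper takes for granted: a single coordinatewise coupling that makes the laws $q_\ell$ consistent over the whole lattice. You then propagate only the one-time marginal by a one-step identity, which needs no Markov property of the reversed path, whereas the paper factorises the joint law of the path.

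Two side remarks are false, though, and should be corrected.

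\emph{The law in \eqref{eq:stochlaw} is not the exact conditional.} Under your Brownian coupling, $q(y_{k+1}\mid y_k)=\int \mathcal{N}(r y_k+(1-r)x_0,\ \tau_k I)\,p(x_0\mid y_k)\,dx_0$. This is a Gaussian mixture with covariance $\tau_k I+(1-r)^2\,\mathrm{Cov}(x_0\mid y_k)$, of which \eqref{eq:stochlaw} keeps only the mean. The discrepancy is the one-step error of the continuous channel, which the paper counts separately from the dependence cost.

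\emph{The law in \eqref{eq:diraclaw} does not match the marginals $q_{\ell^t}$ at finite steps.} One step from level $a$ to clean, with $x_0$ uniform on $\{-1,+1\}$, refutes both claims at once. There $r=\tau_k=0$, and both laws output $\hat x_k(y)=\tanh(y/a)\in(-1,1)$, which is never a data point.

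The next-token conditional is exactly your object. The law in \eqref{eq:stochlaw} approaches it, and the law in \eqref{eq:diraclaw} preserves the marginals, only as the level increments vanish. So the lemma has to be read, as you propose, for the exact conditionals.

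Finally, the truncation at $\top$ is not needed, and it actually changes the family of conditionals. Every state that keeps some coordinates at the top level would then condition on their noisy values. Instead, encode the top level as a value independent of $x_0$, as the mask is for tokens. This makes $q_\top$ a proper law and lets your one-step identity apply from the first step.
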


The autoregressive chain rule is the sequential path, one coordinate at a time, and the diffusion chain the synchronised path on the noise axis; we define the lattice's corners as these two paths. A trained model supplies the laws of Lemma~\ref{prop:paths} at the lattice
points its training distribution covers, and trained diffusion and autoregressive models differ in that distribution as well as in their schedules.

\subsection{One model for every schedule}
\label{sec:corners}

The rest of the paper compares schedules on this lattice. We first show that one model can traverse it: one model, trained on a distribution over the whole lattice, decodes both corners and the schedules between them with the same weights (Figure~\ref{fig:corners}). At the sequential token corner, its bits per character (bpc) are 0.06 below those of a dedicated masked model trained on random subsets. On pixels, the schedules it decodes span an order of magnitude in step count.

\begin{figure}[t]
\centering
\begin{minipage}[c]{0.61\textwidth}
\centering
\includegraphics[width=\linewidth]{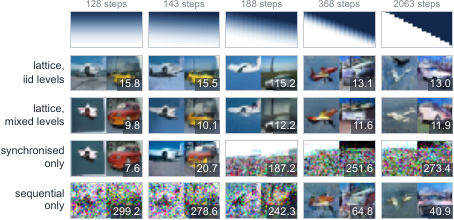}
\end{minipage}\hfill
\begin{minipage}[c]{0.37\textwidth}
\centering
\renewcommand{\arraystretch}{0.96}\setlength{\tabcolsep}{3pt}
\begin{tabular}{@{}lc@{}}
\toprule
 & bpc \\
\midrule
\multicolumn{2}{l}{AR chain} \\
\quad Dedicated causal & 1.22 \\
\quad Dedicated masked & 1.40 \\
\quad Lattice & 1.34 \\
\quad Two-stream & 1.25 \\
\midrule
\multicolumn{2}{l}{Masked ELBO} \\
\quad Dedicated masked & 1.54 \\
\quad Lattice & 1.67 \\
\quad Two-stream & 1.87 \\
\bottomrule
\end{tabular}
\end{minipage}
\caption{Both corners from one weight set. Pixels: columns give schedules and step counts (Figure~\ref{fig:hero}), rows give training distributions, and cells show 2 samples and CIFAR-10 FID-50K. Tokens: text8 bpc at 85M; AR-chain rows use the identity order and masked-ELBO rows the masked-diffusion bound (protocol in Appendix~\ref{app:cornergaps}).}
\label{fig:corners}
\end{figure}

Following a path requires an architecture that expresses its conditionals and training that covers it. Trained on one family of paths, a model collapses on the others (Figure~\ref{fig:corners}; Appendix~\ref{app:cornergaps}).

\section{The Dependence Cost of a Schedule}
\label{sec:design}

Independent sampling replaces each joint law in Lemma~\ref{prop:paths} with a product of coordinate conditionals. We compare schedules at a fixed number of steps by the dependence they discard.

\label{sec:conserved}

\begin{proposition}[Exact path cost of parallel advance]
\label{thm:pathcost}
Let $\pi = (\ell^{(0)}, \dots, \ell^{(T)})$ be a monotone schedule on
the corruption lattice with advanced sets $S_t$. Write $x_{S_t}$ for
the updated values at the destination levels $\ell^{(t+1)}_{S_t}$,
with the transition index fixed by $\pi$ and suppressed. Let $P^\pi$ be the law of the path that draws
each step from the exact joint law $p(x_{S_t} \mid \ell^{(t)})$, and
$\widetilde P^\pi$ the law of the path that draws each step from the
product $\prod_{i \in S_t} p(x_i \mid \ell^{(t)})$ of the same
coordinate conditionals. Then
\begin{equation}
D_{\mathrm{KL}}\big(P^\pi \,\|\, \widetilde P^\pi\big) \;=\;
\sum_{t=0}^{T-1} \mathbb{E}_{P^\pi}\Big[\mathrm{TC}\big(S_t \mid
\ell^{(t)}\big)\Big],
\qquad
\mathrm{TC}(S \mid \ell) = \sum_{i \in S} H(x_i \mid \ell)
- H(x_S \mid \ell),
\label{eq:tc}
\end{equation}
and the divergence between the two laws of the finished sample is at
most the same sum.
\end{proposition}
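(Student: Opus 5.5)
The plan is to use the chain rule for KL divergence over the Markov structure of the path, and then the data-processing inequality for the final sample. Both $P^\pi$ and $\widetilde P^\pi$ are laws on the whole trajectory $(\ell^{(0)},x^{(0)}),\dots,(\ell^{(T)},x^{(T)})$. They share the initial state at $\top$, and the levels $\ell^{(t)}$ are fixed by $\pi$. At step $t$, coordinates outside $S_t$ keep their values under both laws. The two laws therefore factor as
\[
P^\pi = \prod_{t} p(x_{S_t}\mid \ell^{(t)}), \qquad \widetilde P^\pi=\prod_t \prod_{i\in S_t} p(x_i\mid \ell^{(t)}).
\]
In each factor, the conditioning on $\ell^{(t)}$ includes the current (partially corrupted) values of the trajectory, as in the convention after Definition~\ref{def:lattice}. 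Two points need checking here. First, the one-step kernels at $t$ depend on the past only through the state at $t$. Second, the product kernel in $\widetilde P^\pi$ uses the same marginals $p(x_i\mid\ell^{(t)})$ of the exact joint evaluated at the \emph{current} state, even when that state was reached by the product sampler.

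The first main step is the chain rule:
\[
D_{\mathrm{KL}}(P^\pi\|\widetilde P^\pi)=\sum_{t=0}^{T-1}\mathbb{E}_{P^\pi}\Big[D_{\mathrm{KL}}\big(p(x_{S_t}\mid\ell^{(t)})\,\big\|\,\textstyle\prod_{i\in S_t}p(x_i\mid\ell^{(t)})\big)\Big].
\]
The outer expectation is over the prefix of the path under $P^\pi$, which is why the statement writes $\mathbb{E}_{P^\pi}$. The deterministic coordinates outside $S_t$ contribute nothing, because the two kernels agree there: both are point masses at the retained values, so that factor has zero divergence.

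The second main step is to identify each inner term. For a fixed conditioning state, the KL divergence from a joint law to the product of its own marginals is the total correlation:
\[
\mathbb{E}\Big[\log p(x_S\mid\ell)-\sum_{i\in S}\log p(x_i\mid\ell)\Big]=\sum_{i\in S}H(x_i\mid\ell)-H(x_S\mid\ell)=\mathrm{TC}(S\mid\ell).
\]
This is a one-line computation for tokens. For continuous values it holds with differential entropies, since the differences of log-densities are invariant to the reference measure, provided the conditionals have densities. For the deterministic law of \eqref{eq:diraclaw} they do not, so there I would state the result for the stochastic law \eqref{eq:stochlaw}, or define TC directly as that KL divergence, which is well defined (possibly $+\infty$). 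Summing over $t$ gives \eqref{eq:tc}.

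The final claim, about the finished sample, follows from the data-processing inequality. The finished sample is the deterministic map from the trajectory to its last state $x^{(T)}$ at $\bot$. Pushing both path laws forward through this map can only decrease the KL divergence, so the divergence between the two laws of the finished sample is at most the sum in \eqref{eq:tc}.

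I expect the main obstacle to be the measure-theoretic bookkeeping rather than any inequality. Two things need care: making the path laws precise when the conditioning states include continuous corrupted values, and handling degenerate (Dirac) components so that the chain rule applies with all terms well defined. The rest is the standard chain rule combined with the identity KL(joint $\|$ product of marginals) $=$ TC.
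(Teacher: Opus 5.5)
Your proposal is correct and follows the paper's own proof: the chain rule for relative entropy over the two Markov path laws, each per-step divergence identified as the total correlation of the advanced set, and data processing for the finished sample. Your extra care about deterministic (Dirac) steps, where the entropy formula is undefined and TC should be read directly as the divergence, is a reasonable refinement. The paper does not address that case; there the divergence is zero, since the joint and the product are the same point mass.
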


Proposition~\ref{thm:pathcost} holds on every monotone path of the lattice, for continuous values and interior levels as well as for tokens, and the identity of \citet{chen2025schedules} is its special case on the two-level absorbing lattice with every advanced set fixed. The two also answer different questions. Averaged over positions drawn at random, their identity depends only on how many tokens each step unmasks, through the mean information that a token shares with a random set of revealed tokens, and their schedules choose these step widths. Proposition~\ref{thm:pathcost} keeps the positions and levels of a given path and asks which coordinates each step should update. Two schedules with the same widths can differ widely in this cost, since neighbouring coordinates share more information than distant ones. On text8, switching from contiguous to spread selection at fixed widths lowers bpc by about 2 bits, while changing the widths at a fixed rule moves it by less than 0.3 bits (Appendix~\ref{app:families}). Along a random scan of the absorbing lattice, the sum is the one that \citet{cai2026confidence} control.

At an interior level, the cost is the total correlation of the values that the step reaches. Advancing a set to an interior level before the clean level reduces the cost by the information that the other coordinates' interior values carry about each coordinate beyond its own (Appendix~\ref{app:graded}). Two advances per coordinate double the step width at a fixed step budget.

The dependence cost in Proposition~\ref{thm:pathcost} vanishes on a
sequential path, where each step advances one coordinate. On a channel with a continuum of
levels it also vanishes along the synchronised path, as the level
increments tend to zero. On the two-level token channel the synchronised
path is a single step that reveals every coordinate and pays their full
total correlation.

When dependence decreases with distance, selecting adjacent coordinates incurs the largest cost at a given width. The contiguous rule of Table~\ref{tab:rules} makes this selection at every step.

A continuous decoder applies the conditional mean through
\eqref{eq:stochlaw} or \eqref{eq:diraclaw} at each step. Its measured
decoding error comprises the dependence cost in
Proposition~\ref{thm:pathcost}, errors in the trained conditionals,
and the one-step error of the continuous channel.

\section{Schedules with Zero Dependence Cost}
\label{sec:optimal}

Revealed coordinates separate the masked ones: on a chain, one coordinate per masked run is a conditionally independent set (Figure~\ref{fig:separator}). We define a coordinate as resolved once it is clean.

\begin{figure}[tb]
\centering
\includegraphics[width=0.86\textwidth]{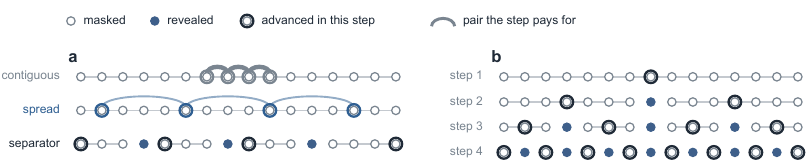}
\caption{(a) The contiguous, spread and separator rules for a width-4 step on a chain of
16 coordinates, the separator rule from a state with 3 revealed coordinates. Arcs connect consecutive advanced coordinates, with
heavier arcs indicating shorter distances. (b) The midpoint rule on
15 coordinates, advancing the midpoint of every run of masked
coordinates at each step.}
\label{fig:separator}
\end{figure}

\begin{theorem}[Separator schedules]
\label{thm:separator}
Fix a lattice state and suppose that, given its revealed coordinates, the unresolved coordinates are Markov with respect to a graph $G$, and that at this and every later state any two unresolved coordinates joined by a path of unresolved coordinates are dependent. Then the fewest steps of a schedule with zero dependence cost equal the treedepth $\mathrm{td}(G)$ of $G$.
\end{theorem}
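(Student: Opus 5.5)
We prove the two inequalities separately, using Proposition~\ref{thm:pathcost} to translate zero cost into a combinatorial condition. A step advancing $S$ from state $\ell$ has zero dependence cost exactly when $\mathrm{TC}(S\mid\ell)=0$ almost surely, that is, when the coordinates of $S$ are mutually independent given the current state. We treat the two-level case, where advancing means resolving; interior advances only add steps without changing the separation structure.

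\textbf{Upper bound.} We take an elimination tree (treedepth decomposition) $F$ of $G$ of depth $\mathrm{td}(G)$, rooted forest on the vertex set, with every edge of $G$ joining an ancestor–descendant pair. At step $t$ we resolve all vertices at depth $t$. Before that step, all vertices of depth $<t$ are revealed. Two distinct depth-$t$ vertices $u,v$ are not in an ancestor relation, so any path from $u$ to $v$ in $G$ must pass through a common ancestor of depth $<t$. This holds because the subtrees of $F$ rooted at depth-$t$ vertices are unions of components of $G$ minus the ancestors, a standard property of elimination forests. Hence the depth-$t$ vertices lie in distinct components of $G$ restricted to the unresolved set. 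By the global Markov property with respect to $G$, these components, and hence the depth-$t$ vertices, are jointly independent given the revealed coordinates. So every step has $\mathrm{TC}=0$, and the schedule uses $\mathrm{td}(G)$ steps. Revealed coordinates of the initial state are already conditioned on, so we apply this to $G$ itself.

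\textbf{Lower bound.} We use induction on $|V(G)|$, with the recursive characterization
\[
\mathrm{td}(G)=\max_{C}\mathrm{td}(C)\ \text{over components},\qquad \mathrm{td}(C)=1+\min_{v}\mathrm{td}(C-v)\ \text{for connected } C.
\]
Take any zero-cost schedule, and let $S$ be its first step. By the dependence hypothesis, any two vertices of $S$ in the same component of $G$ are joined by an unresolved path and are therefore dependent. Pairwise dependence makes $\mathrm{TC}>0$, since total correlation dominates each pairwise mutual information. So $S$ contains at most one vertex per component. The remaining schedule is a zero-cost schedule from the new state. There the unresolved coordinates are Markov on $G-S$, obtained by conditioning a Markov field on revealed values, and the dependence hypothesis still holds because it was assumed at every later state. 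By induction the remaining schedule needs at least $\mathrm{td}(G-S)$ steps. For a component $C$ of maximal treedepth, $C-S$ is either $C$ minus one vertex $v$, with $\mathrm{td}(C-v)\ge \mathrm{td}(C)-1$, or $C$ itself. In either case $\mathrm{td}(G-S)\ge \mathrm{td}(G)-1$, giving at least $\mathrm{td}(G)$ steps in total.

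\textbf{Main obstacle.} The delicate part is the lower bound's reliance on the hypotheses at later states. We must check that conditioning preserves the Markov property on the induced subgraph and that the per-step zero-cost condition is required almost surely along $P^\pi$. The second point follows because each term in \eqref{eq:tc} is a nonnegative expectation, so each must vanish. We also need the inequality $\mathrm{TC}(S)\ge I(x_u;x_v)$ for $u,v\in S$, which follows from the chain rule for total correlation.
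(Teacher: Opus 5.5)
Your proof is correct and is essentially the paper's argument: the Markov property makes a step with one coordinate per component of the unresolved graph free, the dependence hypothesis together with $\mathrm{TC}(S\mid\ell)\ge I(x_u;x_v\mid\ell)$ forbids resolving two coordinates of one component in a zero-cost step, and the recursive characterisation of treedepth turns both facts into $\mathrm{td}(G)$. Building the upper-bound schedule level by level from an optimal elimination forest, instead of through the paper's recursion $f(G)=\max_C f(C)$ and $f(G)\le 1+\min_v f(G-v)$, is only a change of presentation; your two-level restriction is also harmless, because the same induction run on the sets each step resolves, dropping steps that resolve nothing, covers interior levels as the paper does.
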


Given the revealed coordinates, coordinates in different connected components of $G$ are independent, and a step that resolves one coordinate per component has zero cost (Appendix~\ref{app:design}). Two coordinates of one component are dependent, and a zero-cost step resolves at most one of them. Resolving a coordinate removes it from $G$ and can split its component. Hence a zero-cost schedule traces an elimination tree of $G$, in which each resolved coordinate is the parent of the components it leaves and each step resolves one depth. The least depth of such a tree is the treedepth \citep{nesetril2006treedepth}. On a chain of $d$ coordinates, the midpoint rule resolves the midpoint of every run, which halves every run, and finishes in $\lceil \log_2 (d+1) \rceil$ steps, the chain's treedepth. On an $n\times n$ grid, splitting the grid takes a whole row or column, and the treedepth lies between $n+1$ and a constant multiple of $n$ \citep{nesetril2012sparsity, bodlaender1998arboretum, george1973nested}.

The separator rule starts a step at the middle masked coordinate and adds, one at a time, the masked coordinate farthest from those already chosen, at most one per masked run until every run has one.

\section{Ranking Schedules below the Treedepth}
\label{sec:choose}

When the coordinates are dependent along the graph's paths, every schedule with fewer steps than the treedepth pays a positive cost (Theorem~\ref{thm:separator}). At a fixed step budget, we compare the candidates' path costs in \eqref{eq:tc} through a pairwise lower bound. For a specified advance from a lattice state, we define the kernel
as the conditional mutual information between two updated coordinate
values, with the destination levels suppressed,
\[
\kappa(i,j \mid \ell) \;=\; I\big(x_i;\, x_j \,\big|\, \ell\big),
\]
which is \eqref{eq:tc} for the pair $S=\{i,j\}$.

\begin{lemma}[The pairwise lower bound]
\label{prop:pairs}
For an advanced set $S = \{i_1 < \dots < i_k\}$, at any state and
under any data distribution,
\[
\mathrm{TC}(S \mid \ell) \;\ge\; \sum_{j=2}^{k}
I\big(x_{i_j};\, x_{i_{j-1}} \,\big|\, \ell\big),
\]
the kernel summed over consecutive members of $S$. Equality holds
exactly when, given the state, each selected coordinate is
independent of the earlier selected ones given its predecessor: a
first-order Markov reference.
\end{lemma}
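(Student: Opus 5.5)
The plan is to expand the total correlation by the chain rule in the order $i_1<\dots<i_k$, and then bound each term from below by a single pairwise mutual information. Everything is conditioned on the state $\ell$, which I suppress below.

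First, the chain rule for joint entropy gives $H(x_S)=H(x_{i_1})+\sum_{j=2}^k H(x_{i_j}\mid x_{i_1},\dots,x_{i_{j-1}})$. Substituting this into the definition of $\mathrm{TC}$ in \eqref{eq:tc} yields the exact decomposition
\[
\mathrm{TC}(S\mid\ell)=\sum_{j=2}^{k} I\big(x_{i_j};\,x_{i_1},\dots,x_{i_{j-1}}\,\big|\,\ell\big).
\]
The same expansion holds verbatim for differential entropies in the continuous case, since $\mathrm{TC}$ is a KL divergence between the joint law and the product of its marginals. Because the statement is meant to cover continuous values and interior levels, I would phrase each step in terms of mutual information rather than entropy. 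This avoids needing every differential entropy to be finite.

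Second, for each $j$ apply the chain rule for mutual information:
\[
I(x_{i_j};x_{i_1},\dots,x_{i_{j-1}})=I(x_{i_j};x_{i_{j-1}})+I(x_{i_j};x_{i_1},\dots,x_{i_{j-2}}\mid x_{i_{j-1}}).
\]
The second term is a conditional mutual information, hence a conditional KL divergence, and so it is nonnegative. Dropping it and summing over $j$ gives the claimed bound. Neither step uses any property of the data distribution or of the state, so the bound holds at any state and under any data distribution, as stated.

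Third, equality. The total slack is $\sum_{j=2}^k I(x_{i_j};x_{i_{<j-1}}\mid x_{i_{j-1}},\ell)$. A sum of nonnegative terms vanishes if and only if every term vanishes. Each term vanishes if and only if $x_{i_j}$ is conditionally independent of $x_{i_1},\dots,x_{i_{j-2}}$ given $x_{i_{j-1}}$ and $\ell$. This is exactly the first-order Markov condition in the statement. The one point needing care, which I expect to be the main obstacle, is this "if and only if" in full generality for continuous or mixed laws. There I would use that a conditional KL divergence vanishes only when the conditional laws agree almost surely, which gives the independence almost surely. The infinite case also needs attention: if some $I$ is infinite, the inequality holds trivially, and equality then reduces to matching infinities. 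I would restrict the equality claim to the case where $\mathrm{TC}$ is finite.
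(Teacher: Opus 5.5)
Your proof is correct and follows the paper's argument. Both expand $\mathrm{TC}(S\mid\ell)$ by the chain rule into $\sum_{j} I(x_{i_j}; x_{i_1},\dots,x_{i_{j-1}}\mid\ell)$, split each term into the kernel plus the nonnegative slack $I(x_{i_j}; x_{i_1},\dots,x_{i_{j-2}}\mid x_{i_{j-1}},\ell)$, and obtain equality exactly when every slack vanishes. You add a genuine refinement the paper omits: restricting the equality claim to finite $\mathrm{TC}$ is needed, since with continuous values both sides can equal $+\infty$ without the conditional independence holding. The paper in turn adds a remark that your proof lacks but the lemma does not need: a first-order Markov chain, conditioned on the revealed sites, satisfies the condition at every state.
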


We define a schedule's pairwise estimate by summing this bound over its steps. The estimate needs only the path and the kernel and is computed before decoding.

\begin{corollary}[The reference model]
\label{cor:island}
On the reference model of Lemma~\ref{prop:pairs}, a first-order
Markov chain at every state, a step of $k$ coordinates with consecutive
gaps at least $s$, where $\kappa \le \varepsilon$ at distances $\ge s$, pays at
most $(k-1)\,\varepsilon$. A schedule whose first step advances
$S_1 = \{i_1 < \dots < i_m\}$ and whose every later step advances the
midpoint of every run of masked coordinates pays
$\mathrm{TC}(S_1 \mid \top) = \sum_{j=2}^{m} \kappa(i_{j-1}, i_j)$ and
completes within budget $T$ when every gap of $\{0\} \cup S_1 \cup \{d+1\}$ is at most $2^{\,T-1}$.
\end{corollary}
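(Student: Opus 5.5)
The plan is to derive all three claims from Lemma~\ref{prop:pairs} at its equality case, which the reference model satisfies at every state. On a first-order Markov chain, the selected coordinates $x_{i_1},\dots,x_{i_k}$ given any lattice state form a Markov chain in index order. The reason is that conditioning a chain on revealed values, or on corrupted values passed through a coordinatewise channel, keeps it a first-order Markov chain, so each $x_{i_j}$ is independent of $x_{i_1},\dots,x_{i_{j-2}}$ given $x_{i_{j-1}}$. Lemma~\ref{prop:pairs} then holds with equality:
\[
\mathrm{TC}(S\mid\ell)=\sum_{j=2}^k\kappa(i_{j-1},i_j\mid\ell).
\]
For the first claim, each consecutive pair has gap $\ge s$, so each term is at most $\varepsilon$ by hypothesis, and the sum is at most $(k-1)\varepsilon$. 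I will read the hypothesis ``$\kappa\le\varepsilon$ at distances $\ge s$'' as holding at the state in question.

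For the second claim, the first step is taken from $\top$. The equality above with $\ell=\top$ gives $\mathrm{TC}(S_1\mid\top)=\sum_{j=2}^m\kappa(i_{j-1},i_j)$. It remains to show that every later step costs zero, after which Proposition~\ref{thm:pathcost} sums the per-step costs. After $S_1$ is revealed, on a first-order chain the masked runs between consecutive revealed coordinates are conditionally independent given the revealed values, since each run is separated from the others by revealed neighbours. The midpoint rule advances one coordinate per run, i.e.\ from pairwise-independent components. The joint law of one coordinate per run therefore factorises, so $\mathrm{TC}=0$ at every state along the path, not just in expectation. This is the same argument as the zero-cost step in Theorem~\ref{thm:separator}, applied to the chain graph. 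It needs the later steps to move coordinates straight to clean, which is how the midpoint rule is defined.

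For the step count, a run of $g-1$ masked coordinates lies between consecutive elements of $\{0\}\cup S_1\cup\{d+1\}$ separated by gap $g$. The midpoint rule finishes a run of length $n$ in $\lceil\log_2(n+1)\rceil$ steps, as in the chain-treedepth computation after Theorem~\ref{thm:separator}. I will prove this by induction: the midpoint splits a run of $n$ into runs of length at most $\lfloor n/2\rfloor$, and $\lceil\log_2(\lfloor n/2\rfloor+1)\rceil=\lceil\log_2(n+1)\rceil-1$. With $n=g-1\le 2^{T-1}-1$, each run takes at most $T-1$ steps. All runs proceed in parallel, so the total is at most $1+(T-1)=T$.

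The main obstacle is the first paragraph's claim that the equality condition of Lemma~\ref{prop:pairs} holds at $\top$ and at interior levels. There, conditioning is on noisy values of all coordinates rather than on revealed ones. I will handle it by noting that the pair (clean chain, coordinatewise noisy observations) is a hidden Markov model. In an HMM, the posterior of the hidden chain given all observations is again a first-order Markov chain, and any subsequence of a Markov chain is Markov. If the reference model is instead taken to \emph{define} a first-order Markov chain at every state, as the statement's wording suggests, this step is immediate.
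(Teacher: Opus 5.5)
Your proposal is correct and follows the paper's proof. Equality in Lemma~\ref{prop:pairs} on the reference model bounds each consecutive term by $\varepsilon$ and gives $\mathrm{TC}(S_1\mid\top)$, separation by revealed coordinates makes every later midpoint step cost zero, and a gap $g$ is cleared in $\lceil\log_2 g\rceil\le T-1$ further steps; your induction and the hidden-Markov remark only add detail the paper leaves implicit, and the latter is not needed here because this schedule visits only $\top$ and states whose coordinates are clean or fully corrupted.
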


\section{Experiments}
\label{sec:frontier}

We compare predicted and measured rankings of the selection rules of Table~\ref{tab:rules}, at matched step budgets, on our text8 model, the released MAR-B image model, which generates continuous-valued tokens with a per-token diffusion head \citep{li2024mar}, LLaDA-8B \citep{nie2025llada} and a pretrained Diffusion Forcing video model \citep{skyreels2025}. The rules keep a parallel step cheap by the distance between its coordinates or by their separation (Section~\ref{sec:optimal}), or come from released decoders.

\begin{table}[tb]
\centering\small\setlength{\tabcolsep}{3.5pt}\renewcommand{\arraystretch}{1.12}
\caption{Selection rules by principle, what each step selects, and the models they are tested on. Schematics as in Figure~\ref{fig:hero}: 16 coordinates from all masked (top) to all clean (bottom) under steps of widths 1, 2, 4 and 9, or one snapshot of an image-token grid. Contiguous, random and confidence selection follow \citet{arriola2025bd3lm}, \citet{sahoo2024mdlm} and \citet{nie2025llada}; dilated and low-discrepancy selection follow \citet{luxembourg2025dus} and \citet{besnier2025halton}.}
\label{tab:rules}
\begin{tabular}{lllcccc}
\toprule
Principle & Rule & Each step selects & Schedule & text8 & MAR & LLaDA \\
\midrule
released & contiguous & neighbours, in order (raster) & \raisebox{-0.32\height}{\includegraphics[height=0.15in]{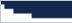}} & \checkmark & \checkmark & \\
 & random & random positions & \raisebox{-0.32\height}{\includegraphics[height=0.15in]{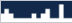}} & \checkmark & \checkmark & \checkmark \\
 & confidence & the most confident & \raisebox{-0.32\height}{\includegraphics[height=0.15in]{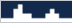}} & \checkmark & & \checkmark \\
\midrule
distance & spread & evenly spaced & \raisebox{-0.32\height}{\includegraphics[height=0.15in]{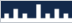}} & \checkmark & \checkmark & \checkmark \\
 & dilated & a fixed interleaved order & \raisebox{-0.32\height}{\includegraphics[height=0.15in]{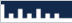}} & \checkmark & & \\
 & low-discrepancy & an evenly covering order & \raisebox{-0.32\height}{\includegraphics[height=0.17in]{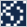}} & & \checkmark & \\[1.5pt]
 & spread, then raster & spread, then rows & \raisebox{-0.32\height}{\includegraphics[height=0.17in]{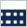}} & & \checkmark & \\
\midrule
separation & separator & one per masked run & \raisebox{-0.32\height}{\includegraphics[height=0.15in]{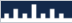}} & \checkmark & & \\
 & midpoint & farthest from revealed & \raisebox{-0.32\height}{\includegraphics[height=0.15in]{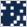}} & \checkmark & \checkmark & \checkmark \\
 & nested & middle row and column first & \raisebox{-0.32\height}{\includegraphics[height=0.17in]{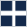}} & & \checkmark & \\
\midrule
constrained & min-distance & confident, kept apart & \raisebox{-0.32\height}{\includegraphics[height=0.15in]{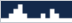}} & \checkmark & & \checkmark \\
 & separator constraint & confident, one per run & \raisebox{-0.32\height}{\includegraphics[height=0.15in]{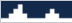}} & & & \checkmark \\
\bottomrule
\end{tabular}
\end{table}

On text8, we score the true characters after a revealed prefix against the left-to-right chain (Figure~\ref{fig:fronts}a) and against each schedule's own sequential chain, which reveals the same coordinates one at a time.

\subsection{The kernel}
\label{sec:kernel}

We estimate the two entropy terms in \eqref{eq:tc} using the trained
model. A single forward pass gives the marginal entropies
$H(x_i \mid \ell)$. To obtain the joint entropy $H(x_S \mid \ell)$,
we sum the conditional entropies along a chain that reveals the
coordinates of $S$ one at a time, requiring one forward pass per
coordinate. On text8, we estimate the kernel as a function of distance and mask rate, for pairs within a word and pairs across a word boundary (Figure~\ref{fig:kernel}a, Table~\ref{tab:kernelrates}), and evaluate it at each step's distances and mask rate. At distances up to 4, dependence within a
word is 2 to 4 times that across a boundary at the same distance;
by distance 16, the kernel is near zero.

\begin{table}[tb]
\centering\small
\caption{MAR-B kernels at revealed fraction 0.5 (left) and 8-step floors (right). The kernels give the relative displacement of a masked token's prediction when a revealed neighbour at Chebyshev distance $d$ is perturbed, and the Kullback--Leibler divergence in bits of its head law when such a neighbour is revealed. A floor sums the kernel at fraction 0.5 over consecutive tokens of each step and over the steps; spread-then-raster's floors came after its decode. FID-50K as in Table~\ref{tab:mar}.}
\label{tab:marinfo}
\begin{tabular}{ccc@{\hspace{1.4em}}lccc}
\toprule
 & \multicolumn{2}{c}{kernel} & & \multicolumn{2}{c}{floor} & \\
\cmidrule(lr){2-3}\cmidrule(lr){5-6}
$d$ & displacement & information & 8 steps & displacement & information & FID \\
\midrule
1 & 0.330 & 1.449 & raster & 77.1 & 338 & 139.80 \\
2 & 0.119 & 0.209 & spread, then raster & 46.4 & 182 & 17.44 \\
3 & 0.051 & 0.048 & random & 8.11 & 14.7 & 13.02 \\
4 & 0.030 & 0.017 & midpoint & 7.56 & 12.5 & 10.63 \\
5 & 0.021 & 0.009 & low-discrepancy & 3.87 & 1.57 & 8.32 \\
 & & & nested & 3.76 & 2.10 & 13.73 \\
 & & & spread & 3.27 & 0.641 & 9.44 \\
\bottomrule
\end{tabular}
\end{table}

\begin{figure}[t]
\centering
\includegraphics[width=0.9\textwidth]{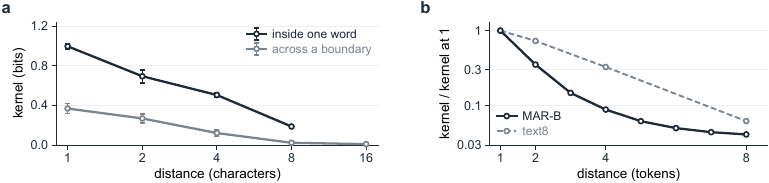}
\caption{Kernels measured before decoding. (a) Entropy decrease in bits when a second text8
coordinate is revealed, at mask rate 0.5, by sequence distance: within-word and across-boundary
pairs, mean and s.e.m. over 3 seeds, 512 pairs per cell. (b) Relative displacement of a masked-token
prediction after a revealed-token perturbation on MAR-B, at revealed fraction 0.5, by grid distance,
beside the across-boundary curve of (a); both are normalised at distance 1.}
\label{fig:kernel}
\end{figure}

On the MAR token grid, we measure dependence by the
relative change in a prediction when a revealed token is perturbed.
This response stands in for the mutual-information kernel and falls
below 5\% of its nearest-neighbour value by grid distance 8
(Figure~\ref{fig:kernel}b). An information kernel, the Kullback--Leibler divergence of a masked token's head law when one more neighbour is revealed, falls much faster with distance than this response; floors built from the two order the rules alike, up to 2 rules whose displacement floors nearly tie (Table~\ref{tab:marinfo}, Appendix~\ref{app:marinfo}).

\subsection{The ranking of schedules}
\label{sec:text8}

\begin{table}[tb]
\centering\small\setlength{\tabcolsep}{4.2pt}
\caption{Measured quality by selection rule and number of decoding steps. Left: text8 bpc under the protocol of Figure~\ref{fig:fronts}; the contiguous rule uses one step per block. Right: FID-50K on ImageNet-256 from released MAR-B weights and MAR's released evaluator, with only the reveal order changed; 1 seed per cell, except random and midpoint at 8 and 16 steps, averaged over 3 seeds within 0.25 FID. The 256-step random rule reproduces the published score.}
\label{tab:methods}\label{tab:mar}
\begin{tabular}{lcccc@{\hspace{1.4em}}lccccc}
\toprule
text8, bpc & 8 & 16 & 32 & 64 & MAR-B, FID & 8 & 16 & 32 & 64 & 256 \\
\midrule
contiguous & 3.14 & 2.44 & 1.77 & 1.32 & raster & 139.80 & 46.97 & 13.98 & 7.49 & 3.93 \\
confidence & 2.95 & 2.18 & 1.76 & 1.50 & random, the model's own & 13.02 & 4.10 & 2.48 & 2.32 & 2.33 \\
random & 1.80 & 1.67 & 1.62 & 1.58 & midpoint & 10.63 & 3.63 & 2.48 & 2.36 & 2.31 \\
dilated & 1.66 & 1.60 & 1.59 & 1.59 & spread & 9.44 & 3.27 & 2.40 & 2.34 & 2.28 \\
\cmidrule(r){1-5}
spread & 1.65 & 1.56 & 1.49 & 1.32 & low-discrepancy & 8.32 & 3.13 & 2.36 & 2.27 & 2.29 \\
separator & 1.63 & 1.52 & 1.38 & 1.47 & nested & 13.73 & 4.26 & 2.37 & --- & --- \\
 & & & & & spread, then raster & 17.44 & 5.80 & --- & --- & --- \\
\bottomrule
\end{tabular}
\end{table}

We rank the selection rules by their pairwise estimates, computed from
the model's kernel using Lemma~\ref{prop:pairs}. On the sequence,
contiguous selection groups the pairs with the largest kernel values.
Confidence selection favours coordinates already determined by local
context, which usually lie close together, and its estimate is large as well. Random selection mixes near and distant pairs and has an intermediate estimate. The spread rule and the separator rule keep selected pairs farther apart and have the smallest estimates. On the grid, raster order has the largest estimate, then the random and midpoint rules, and the spread, nested and low-discrepancy rules the smallest.

\begin{figure}[tb]
\centering
\includegraphics[width=0.94\textwidth]{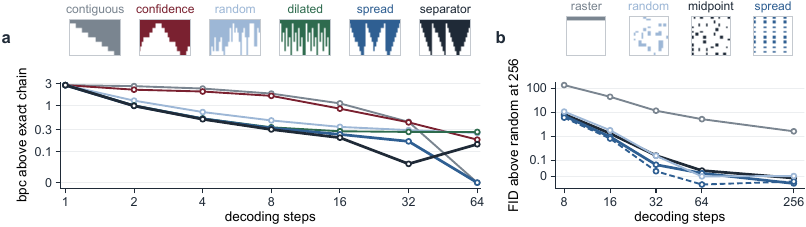}
\caption{Schedule quality versus decoding steps. (a) Excess bpc over the exact left-to-right chain on 64 held-out text8 characters after a revealed 192-character prefix, teacher forced on one weight set. Insets show the schedules. (b) Excess FID-50K over MAR-B's 256-step random order; patches show the first 32 tokens of each order and the dashed blue curve is low-discrepancy. Both vertical axes are linear below 0.1 and logarithmic above.}
\label{fig:fronts}
\end{figure}

On text8, the rules of Table~\ref{tab:predicted} rank as their pairwise estimates predict at 8, 16 and 32 steps
(Figure~\ref{fig:fronts}a, Table~\ref{tab:methods}). At 64 steps, each rule selects one
coordinate per step, so dependence cost vanishes and bpc reflects order-dependent errors
in the trained conditionals (Section~\ref{sec:magnitude}).

On MAR, FID follows the predicted ordering of the raster, random, midpoint and spread rules at 8
and 16 steps, random and midpoint tie at 32, and the low-discrepancy rule, which the estimates
place above spread, reads lowest
(Figure~\ref{fig:fronts}b, Table~\ref{tab:mar}). The spread rule's gain is largest at 8 steps, where it lowers FID by more than 25\% from the
model's own random order. Each
step then advances 32 of the 256 tokens on average, and up to 49. Randomly selecting that
many tokens places many pairs at distances where the kernel is still large, and smaller selections place fewer. At 64 steps, every
order except raster lies within 0.1 FID of the others, as their estimates
predict.

We use LLaDA to test rankings under generative perplexity and task accuracy. Confidence selection chooses tokens by the model's output, which a position-only rule ignores. The kernel predicts a gain from keeping the model's confidence ranking while excluding strongly dependent pairs, which the min-distance rule and the separator constraint do. Relative to plain confidence, the min-distance rule lowers perplexity at every measured budget, also once repetition is controlled (Appendix~\ref{app:lladappl}), and both constraints improve GSM8K accuracy from 8 to 32 steps (Table~\ref{tab:llada}). The GSM8K gain is largest at 8 and 16 steps, where steps are widest, and disappears from 64 steps on.

The cost also predicts a trend, which we verify with the above experiments: the fewer the steps, the larger the gain over the released models' own rules, in FID and GSM8K accuracy (Tables~\ref{tab:mar} and~\ref{tab:llada}).

On a Diffusion Forcing video model \citep{chen2024diffusionforcing, skyreels2025}, the schedule controls memory depth: how many earlier frames each segment conditions on. Greater depth costs more steps, and the slowly decaying inter-frame kernel predicts the same quality, as subject consistency shows (Appendix~\ref{sec:released}). Appendices~\ref{app:freegen}, \ref{app:lladappl}
and~\ref{app:humaneval} test free generation, position-only LLaDA rules and HumanEval.

\begin{table}[tb]
\centering\footnotesize\setlength{\tabcolsep}{1.4pt}
\caption{Changes relative to plain confidence on LLaDA-8B-Base. GSM8K: strict-match percentage points, 5-shot greedy decoding on 1,319 problems, paired standard errors. Perplexity: change in log generative perplexity (nats), bootstrap intervals (Table~\ref{tab:lladappl}). One 256-position block; minimum distance 6. When too few positions satisfy a constraint, confidence fills the step: 1--21\% of GSM8K steps for min-distance and 8--55\% for the separator constraint.}
\label{tab:llada}
\begin{tabular}{lccccc}
\toprule
Steps & 8 & 16 & 32 & 64 & 128 \\
\midrule
accuracy, min-distance & $+9.7 \pm 0.8$ & $+7.6 \pm 0.9$ & $+5.3 \pm 1.5$ & $-0.9 \pm 1.6$ & $-0.2 \pm 1.4$ \\
accuracy, separator constraint & $+9.7 \pm 0.8$ & $+8.2 \pm 0.9$ & $+6.5 \pm 1.5$ & $-2.0 \pm 1.6$ & $-1.4 \pm 1.3$ \\
perplexity, min-distance & $-1.04$ & $-0.90$ & $-1.49$ & $-1.56$ & --- \\
\quad 95\% interval & {\footnotesize[$-1.25$, $-0.83$]} & {\footnotesize[$-1.16$, $-0.65$]} & {\footnotesize[$-1.82$, $-1.15$]} & {\footnotesize[$-1.88$, $-1.25$]} & \\
\bottomrule
\end{tabular}
\end{table}

\subsection{The magnitude of the cost}
\label{sec:magnitude}

To see whether the estimate also gives the size of the cost, and how the cost compares with errors in the trained conditionals, we measure it on text8 as the excess bpc of a decode over the schedule's own sequential chain (Table~\ref{tab:predicted}). The estimate is at or below this excess at every budget and ranks the rules in the same order, including the spread--separator tie at 2 steps. At 2 steps it is about 30\% to 50\% of the excess; the absolute gap narrows with more steps.

\begin{table}[tb]
\centering
\begin{minipage}[c]{0.58\textwidth}
\centering\footnotesize
\renewcommand{\arraystretch}{1.2}
\setlength{\tabcolsep}{3.5pt}
\begin{tabular}{lccc}
\toprule
estimate / measured & 2 steps & 8 steps & 32 steps \\
\midrule
contiguous & 0.83 / 2.64 & 0.62 / 1.82 & 0.33 / 0.44 \\
random & 0.54 / 1.09 & 0.15 / 0.23 & 0.02 / 0.03 \\
spread & 0.32 / 0.78 & 0.06 / 0.09 & 0.00 / 0.01 \\
separator & 0.32 / 0.78 & 0.01 / 0.06 & 0.00 / 0.00 \\
\bottomrule
\end{tabular}
\end{minipage}\hfill
\begin{minipage}[c]{0.40\textwidth}
\centering
\includegraphics[width=\linewidth]{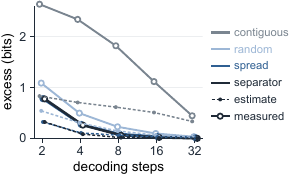}
\end{minipage}
\caption{Pairwise estimate and measured excess bpc over each schedule's own sequential chain (protocol of Figure~\ref{fig:fronts}). The plot adds 4 and 16 steps; estimates filled, measurements open.}
\label{tab:predicted}
\end{table}

A schedule's total bpc (Table~\ref{tab:methods}) minus its excess is the bpc of its sequential chain, and differences between chains are conditional errors, since text entropy is independent of order. The left-to-right chain has the lowest bpc, but contiguous selection pays the largest dependence cost, and spread a lower one with higher conditional error. When steps are wide, their difference in dependence cost is several times their difference in conditional error. When steps narrow, the random, spread and separator rules have almost the same dependence cost, but conditional errors that differ by tenths of a bit. Conditional errors explain the separator rule's rise from 32 to 64 steps and the tie of contiguous and spread at 64.

\subsection{Separation}
\label{sec:mar}

Section~\ref{sec:optimal} keeps a step cheap by separation, which gives zero cost on a Markov graph, or by distance, which only lowers it, and we test which of the two the rankings follow: on text8 one revealed character splits a run, and on MAR's token grid a split takes a whole row. On text8, a revealed character leaves most of the dependence between its neighbours within a word, and among the rules that take one coordinate per run, the separator rule, which also keeps its coordinates apart, has at most 25\% of the midpoint rule's excess (Appendix~\ref{app:separation}). On the grid, any advantage of midpoint selection comes from distance: it lowers FID below random by about 20\% at 8 steps and 10\% at 16, as the kernel's decay predicts. Spread and low-discrepancy selection lower it further, while spread-then-raster reads above random, in the order of its floors (Tables~\ref{tab:marinfo} and~\ref{tab:mar}). Below the grid's treedepth, distance sets the ranking.

Theorem~\ref{thm:separator} also sets the zero-cost budget. A chain of 64 masked text8 characters has treedepth 7, so the 8-step budget suffices on the first-order Markov reference (Corollary~\ref{cor:island}). Real text retains dependence across revealed characters (Appendix~\ref{app:separation}): at 8 steps the separator rule's measured excess is several times its estimate, and both are almost zero only at 32 (Table~\ref{tab:predicted}). MAR's $16\times16$ token grid has treedepth between 17 and 46, and under the assumptions of Theorem~\ref{thm:separator} every order has a positive cost in 8 and 16 steps, and every rule with a 64-step reading has a higher FID at both budgets. The nested rule follows the grid's separators and its 8-step floors lie below random's (Table~\ref{tab:marinfo}), yet it reads above random at both budgets (Table~\ref{tab:mar}).

\subsection{Interior levels}
\label{sec:partial}

A common observation in diffusion language models is that the absorbing channel works best \citep{austin2021d3pm, lou2024sedd, sahoo2024mdlm}, and the cost defined on the lattice gives one explanation consistent with it. An interior level, such as a token under uniform noise or a revealed character class, lowers the dependence cost of each set, since the other coordinates' interior values carry part of the dependence, but adds an advance to every coordinate, which widens every step at a fixed budget (Section~\ref{sec:conserved}); it helps when the saving exceeds the cost of the wider steps. We measure this trade-off with a graded text8 model whose interior level reveals whether a character is a space, a vowel or a consonant. Before decoding, the pairwise estimates of its two stages sum to 2 times the absorbing model's at 8 steps, where the wider steps win, and to within a few thousandths of it from 16, where the two about cancel. The absorbing model's bpc is lower at every budget (Table~\ref{tab:partial}), at 8 steps mostly through the larger dependence cost and from 16 steps mostly through the graded model's own sequential chain (Appendix~\ref{app:graded}).

\begin{table}[tb]
\centering\footnotesize
\caption{Bpc of a complete decode, all stages scored, at equal steps on held-out text8; one 85M backbone per channel.}
\label{tab:partial}
\begin{tabular}{lcccc}
\toprule
Steps & 8 & 16 & 32 & 64 \\
\midrule
absorbing, spread & 1.691 & 1.600 & 1.584 & 1.555 \\
graded, class then character & 1.790 & 1.736 & 1.716 & 1.703 \\
graded, character at once & 1.886 & 1.793 & 1.749 & 1.717 \\
\bottomrule
\end{tabular}
\end{table}

\section{Conclusion}

In this paper we ask whether the performance of the decoding schedules of one model can be predicted before decoding, and build a space and a cost that a design principle for choosing a schedule needs. The corruption lattice is a space that contains diffusion, autoregression and every schedule between them as paths through one family of transition laws. The dependence cost ranks these paths: it is the total correlation that a path's parallel steps discard, and it equals the path's divergence from exact sampling. When the data are Markov on a graph and dependent along its paths, the fewest steps of a zero-cost schedule equal the graph's treedepth for tokens and continuous values alike, which is logarithmic in the length of a sequence but linear in the side length of a grid. With even fewer steps than the treedepth, a kernel of pairwise dependence estimated from pretrained weights predicts how schedules rank before decoding. Where schedules differ in dependence cost, most measured rankings follow the predicted ones across modalities and benchmarks.

The exceptions mark what the framework leaves out. Errors in the trained conditionals lie outside the cost and can determine the ranking when dependence costs are close and reverse a prediction at one training budget that holds at a larger one (Appendices~\ref{app:frontier} and~\ref{app:cornergaps}). The pairwise bound is tight only on a first-order Markov reference. On MAR, the nested rule reads above random and the low-discrepancy rule below spread, against the order of their floors (Sections~\ref{sec:text8} and~\ref{sec:mar}, Appendix~\ref{app:marinfo}). Nonetheless, the lattice and its cost provide a quantified design principle with which future diffusion, autoregressive and intermediate decoders can be designed.

\setlength{\textfloatsep}{8pt plus 2pt minus 3pt}
\setlength{\floatsep}{6pt plus 2pt minus 2pt}
\setlength{\abovecaptionskip}{4pt}
\section*{Ethics Statement}
We use public datasets and released model weights, and train our own models on text8, CIFAR-10 and ImageNet-256. On released models we change only how they decode, and what they can produce stays as released.

\section*{Reproducibility Statement}
Appendices~\ref{app:lattice} and~\ref{app:design} contain the formal statements and proofs. Table~\ref{tab:rules}, Section~\ref{sec:optimal} and Appendix~\ref{app:marinfo} define the decoding rules, and the table and figure captions state the evaluation protocols. We evaluate released models with their released weights. The supplementary code trains our models, estimates the kernels and decodes under every schedule. Each table and figure that reports a measurement has one script, named by its number in the paper, that writes its data.

\section*{AI Use Statement}
We used generative AI tools to polish the manuscript and check novelty against existing publications. We also used them for coding and for checking derivations. We have reviewed all AI-assisted work and take responsibility for the content of the paper.

\bibliography{references}
\bibliographystyle{preprint}

\appendix

\section{Related Work}
\label{sec:related}

\subsection{Hybrids of autoregression and diffusion}
Order-agnostic autoregression and absorbing diffusion are special cases of autoregressive diffusion models, which support parallel generation adapted to a given budget of steps \citep{hoogeboom2022ardm}, and the training objective of absorbing diffusion is that of an any-order autoregressive model \citep{ou2025absorbing}. Block diffusion generates autoregressively across blocks and uses
diffusion within each block \citep{arriola2025bd3lm}. Draft-verify
systems propose tokens with a diffusion head and accept them under an
autoregressive model \citep{tidar2025}. Their deployed distribution
remains autoregressive, with diffusion accelerating the proposals.
Esoteric language models interpolate between the two objectives using
a masking-rate hyperparameter \citep{sahoo2025esolm}.

Hyperschedules assign each token position a monotone noise schedule,
with autoregression and diffusion as members of the resulting family
\citep{fathi2025hyperschedules}. Each schedule is a path on our lattice.
The authors also study block and sliding-window schedules that advance
from left to right on discrete text, obtaining each by fine-tuning a
model trained with the diffusion schedule. AR-Diffusion assigns fewer
denoising steps to tokens on the left, which therefore resolve first
\citep{wu2023ardiffusion}. Both works train under the likelihood bound
of the chosen schedule. Both show that autoregression can be written as a diffusion schedule. Neither quantifies the
dependence discarded when selected coordinates are sampled independently, so each schedule
must be trained before it can be compared. Proposition~\ref{thm:pathcost}
gives this quantity for every path. With a kernel from an existing
model, we can compare schedules before decoding or before training the
target model. The chosen schedule runs on the pretrained weights;
every schedule we compare on released models in
Section~\ref{sec:frontier} uses those weights as released.

Video models also use intermediate corruption levels. Diffusion
Forcing trains with an independent level per coordinate on a causal
architecture, motivated as uncertainty-aware next-token prediction
\citep{chen2024diffusionforcing}. We represent these corruption
patterns on a lattice and derive the dependence cost of each step,
including steps over spatial coordinates. Rolling schedules and
diagonal denoising fix the slope in advance
\citep{ruhe2024rolling, kim2024fifo}; streaming windows carry
monotonically increasing levels \citep{liu2025rollingforcing}. These
arrangements are paths on the same lattice (Figure~\ref{fig:hero}),
whose per-step dependence cost Section~\ref{sec:design} defines. For training across paths, we use two-stream
attention from the order-modelling literature \citep{yang2019xlnet}
to recover the supervision density of autoregression.

\subsection{The dependence cost of a decoding step}
\label{sec:rw:price}
On the absorbing lattice with every advanced set fixed, Proposition~\ref{thm:pathcost} is due to \citet{chen2025schedules}. They sharpen the parallel-decoding bound of \citet{li2025convergence} to an identity: the divergence between the data and the finished sample is the sum over steps of the total correlation of the advanced coordinates conditioned on the revealed ones. Drawing the positions of each step uniformly among those still masked, they reduce the expected divergence to the distance between a curve of mean mutual information and its step approximation, choose how many coordinates each step advances, and derive $O(\log d)$ schedules for $d$ tokens in terms of the total and dual total correlation of the data. The averaging over positions removes the geometry that the kernel of Section~\ref{sec:choose} keeps: their $O(\log d)$ steps bound the error for data of small total or dual total correlation, and the treedepth of Theorem~\ref{thm:separator} counts the steps of exact sampling for data Markov on a graph and dependent along its paths, where the positions each step updates decide the cost. Proposition~\ref{thm:pathcost} extends the identity to every monotone path on the lattice, with coordinates at different levels, and to continuous channels. It gives the dependence cost of each schedule, allowing us to compare autoregressive and diffusion paths with block, sloped and confidence schedules on both tokens and continuous fields.

\citet{luxembourg2025dus} study coordinate selection with a fixed number of steps per
block. They unmask dilated groups of non-adjacent positions in
parallel, minimise an upper bound on joint entropy gain, and outperform
confidence planners on reasoning and code benchmarks at deployment
scale. The selection pattern is fixed before decoding. Independence
comes from distance: on a fast-mixing Markov chain, well-separated
positions have a joint entropy within $\varepsilon$ of the sum of
their marginal entropies. Section~\ref{sec:optimal} obtains exact independence from separation: when the data are Markov on a graph, revealed coordinates divide it into components, and selecting one coordinate per component gives zero total correlation at any state. The two mechanisms make different predictions on a grid, which Section~\ref{sec:mar} tests. Parallel Gibbs sampling updates the variables of one colour class of a Markov random field at once, since they are independent given all their neighbours \citep{gonzalez2011parallel}. A decoding step conditions only on the revealed coordinates, and its zero-cost sets are those separated by revealed coordinates. The fewest such steps are the treedepth of Theorem~\ref{thm:separator}, while a chain or a grid needs only 2 colours.

Confidence-based parallel decoding dates to MaskGIT
\citep{chang2022maskgit}. Subsequent work replaces confidence selection
with supervised planners \citep{asano2026whereto}, learned policies
\citep{jazbec2025unmasking}, or lookahead over unmasking paths
\citep{lee2025lookum}. These methods train a planner or require extra
forward passes per step. The rules we compare require a sort or a
position computation per step. Dilated selection uses a pattern fixed before decoding;
Appendix~\ref{app:overhead} times the other rules. \citet{cai2026confidence} prove
acceleration for an entropy-capped rule that follows a uniformly
random scan, unmasking until the sum of the batch's conditional
entropies crosses a threshold. Their theorem derives and controls
\eqref{eq:tc} summed along the schedule as the exact cost of parallel
unmasking. The averaging argument requires a random scan and does not
cover the sorted rule used by deployed decoders. Moreover, the entropy
cap uses the marginal entropies in \eqref{eq:tc}, while the cost also
depends on the joint entropy of the selected coordinates. We measure the dependence cost of selecting coordinates by confidence in Appendix~\ref{app:design} (Tables~\ref{tab:secondorder} and~\ref{tab:conf}).

\section{The Lattice and Its Corners}
\label{app:lattice}

\noindent\textbf{Lemma~\ref{prop:paths} (Path representation).}
\emph{Let $F$ be the family of transition laws $p_{\ell\to\ell'}$ between lattice points $\ell' \preceq \ell$. Every monotone path composes elements of $F$ into a law whose marginal on the clean coordinates is the data distribution, and two paths differ only in which elements of $F$ they evaluate.}
\begin{proof}
A monotone path visits states
$\ell^{(0)} = \top \succ \dots \succ \ell^{(T)} = \bot$. The joint law of the states along this path factorises into
transition laws. The factor at step $t$ is by definition
the transition law $p_{\ell^{(t)}\to\ell^{(t+1)}}$, an element of $F$
indexed by the source and destination lattice points. Composed from $\top$, the factors run the corruption
process in reverse. The marginal on the clean coordinates is the
data law. When the corrupted state is a function of the data and the
levels, the same composition is a factorisation of the data
likelihood itself. Each factor depends on its two endpoint states.
Two paths through the same lattice draw their factors from the
same family.
\end{proof}

The synchronised path gives a likelihood bound, and the sequential path the exact likelihood, which the AR-chain rows of Figure~\ref{fig:corners} evaluate and which continuous-time diffusion computes through its probability-flow ODE \citep{song2021sde, chen2018neural}.

We define the corner quality gap at a corner as the difference in quality between a model trained on the whole lattice and one trained only for that corner.

\begin{corollary}[No representational corner quality gap]
\label{prop:notax}
A model that realises $F$ realises the conditionals of every path.
Any measured corner quality gap is attributable to finite capacity or optimisation, to the allocation of the training distribution over paths, or to
which conditionals the architecture can supervise at a lattice point, and not to the
coexistence of the two corners in one weight set.
\end{corollary}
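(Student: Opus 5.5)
The plan is to derive Corollary~\ref{prop:notax} directly from Lemma~\ref{prop:paths}, and then to establish the attribution of any gap by elimination. For the first sentence, we fix an arbitrary monotone path $\pi=(\ell^{(0)},\dots,\ell^{(T)})$. By Lemma~\ref{prop:paths}, the law of $\pi$ is the composition of the factors $p_{\ell^{(t)}\to\ell^{(t+1)}}$, and each factor lies in $F$. A model that \emph{realises} $F$ outputs $p_{\ell\to\ell'}$ at every pair $\ell'\preceq\ell$, so it outputs each factor of $\pi$. Composing these outputs reproduces the law of $\pi$, and in particular its clean marginal, the data law. Both corners are such paths: the sequential path, whose factors are the AR conditionals $p(x_i\mid x_{<i})$, and the synchronised path on the noise axis, whose factors are the Gaussian laws \eqref{eq:stochlaw}. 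Hence a single function, and thus a single weight set realising it, supplies exactly the conditionals of both corners. No realisation of one corner constrains the other, because both draw from the same $F$ (the last clause of Lemma~\ref{prop:paths}).

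For the second sentence, we define the corner quality gap as a difference of the paper's quality measure between the lattice-trained model and the dedicated model, evaluated on the same corner path. We decompose the lattice model's deficit by asking where its outputs on that path depart from the true factors in $F$. By the first part, an exact realiser has zero deficit on every corner, so any positive gap requires that the trained model fail to realise some factor on the corner path. We then enumerate why a trained network can fail to realise a given $p_{\ell\to\ell'}$: (i) the function class cannot represent it at the given size, or the optimiser does not reach it (capacity/optimisation); (ii) the training distribution places little mass on the lattice points of that corner, so the objective weights them weakly (allocation); or (iii) the architecture cannot condition on or supervise the needed level labels at that point. Case (iii) is, for example, a non-causal masked model that receives no AR-density supervision, which is exactly the case that two-stream attention addresses. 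The coexistence of both corners is not a separate cause, because the target object $F$ is single and consistent, and there is no conflict between corner targets to trade off.

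The main obstacle is making the exhaustiveness of (i)--(iii) rigorous rather than definitional. I would handle it by phrasing the claim contrapositively: if capacity is sufficient, optimisation is exact, training covers the corner's lattice points with positive mass, and the architecture can express and supervise those conditionals, then the population minimiser of the lattice objective (a sum of per-factor cross-entropies or denoising losses) equals $F$ on the support. By the first part, the gap is then zero. The only nontrivial check is that the per-factor losses have $F$ as their joint minimiser with no cross-factor coupling. This holds because each factor's loss depends only on the outputs at its own $(\ell,\ell')$, so the minimisers separate.
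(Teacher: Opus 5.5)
Your proposal is correct and follows the paper's proof. You restrict $F$ to the factors of each path via Lemma~\ref{prop:paths}, so that a realiser of $F$ supplies both corners, and you attribute any gap to the same three departures from $F$ (model class or optimisation, allocation of the training distribution, and what the architecture can supervise); your contrapositive step, that with these restrictions removed the pointwise-separable population objective is minimised by $F$ on its support, is a more explicit version of the paper's closing sentence that the realised conditionals then coincide with $F$ at both corners.
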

\begin{proof}
By Lemma~\ref{prop:paths} the conditionals of every path are
elements of one family $F$. A model that realises $F$ evaluates the
conditionals of every path by restriction. A trained model departs from $F$ only through a model class that does not contain $F$ or training that does not reach it, through the probability its training distribution assigns to each path, or through an architecture that cannot supervise the conditionals at a lattice point. When these restrictions are absent, the realised conditionals coincide with $F$ at both corners.
\end{proof}

\subsection{The interior states}
\label{app:gradedstates}

Figure~\ref{fig:gradedstates} shows the pixel model's intermediate
states under 5 schedules. The synchronised path denoises the whole
field at a common level, whereas a sloped path produces a visible
noise gradient across the image at a given step. Applying the sloped
path in spread order denoises separated patches first, leaving noisy
regions between the clean ones. The sequential path denoises one patch
at a time.

\begin{figure}[htbp]
\centering
\includegraphics[width=\textwidth]{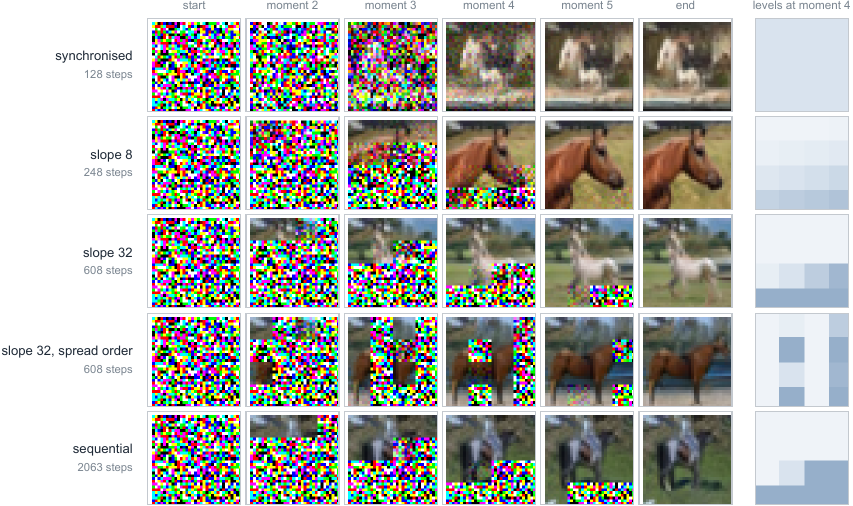}
\caption{Intermediate states of the CIFAR-10 lattice model with
16 patches of 8 by 8 and 128 transitions. Each row shows one of five
schedules at 6 points along its path; all rows use the same seed.
The last column gives the level map at the fourth point, from white
(clean) to blue (corrupted). On the grid of 128 transitions, the mean
noise-level index in each patch column decreases from left to right
by 19 at slope 8, by 24 at slope 32 and by 32 along the sequential
path. It is constant along the synchronised path. In spread order,
the 4 column means are 128, 47, 128 and 15.}
\label{fig:gradedstates}
\end{figure}

\subsection{The corner quality gaps}
\label{app:cornergaps}

Figure~\ref{fig:corners} shows the first 2 pixel samples at a fixed seed and reads CIFAR-10 FID-50K with one evaluator. Its text8 models at 85M share data, budget and seed, and are teacher forced after 192 revealed characters. The lattice model's training distribution splits every batch equally among prefix states, random-subset masking and intermediate widths, the last with a co-trained head that predicts the advanced coordinates jointly. Each model is read at its validation minimum: 25K training steps for the causal model, 165K for the dedicated masked model, 185K for the two-stream model and 510K for the lattice model. The dedicated masked model is clearly better at its minimum than at its last checkpoint, while the lattice and two-stream models read the same at both. The masked-ELBO rows average the chain over random orders, which the causal backbone cannot evaluate.

At the synchronised corner, the dedicated model's FID is about 2 points lower; the lattice model assigns 25\% of its training distribution there (Table~\ref{tab:pixelcorners}). Corollary~\ref{prop:notax} separates what such a gap can come from.

The text8 gaps depend on which conditionals a backbone supervises in one forward pass (Appendix~\ref{app:readable}). At the sequential corner, the two-stream model's bpc lies 0.03 bits above the dedicated causal model's. Under the
masked-diffusion bound, the lattice model's gap to the dedicated masked model is 0.13 bits (Figure~\ref{fig:corners}).

For the mixed model, replacing raster order with spread order lowers
FID at slope 4 but raises it at slope 16
(Table~\ref{tab:pixelcorners}). The dependence cost favours spread order at both slopes, since it keeps
apart the patches that advance together. The held-out objective has been flat
since 60K training steps, and continuing the model to 120K steps on fresh data changes it by less than 0.1\%. With this doubled budget, the spread rule has lower FID than raster
order at slope 16, over 10K samples and over 50K (Table~\ref{tab:pixelcorners}).

On ImageNet-256 we repeat the comparison with 2 SiT-B/2 models
(Table~\ref{tab:imagenet}). At the synchronised corner the lattice
model's FID is within 4 of the dedicated model's. At the sequential
corner the synchronised-only model collapses, and the lattice model's
FID is more than 2 times its own synchronised value and above that of its
synchronised path at 16 steps, which updates each coordinate as often.
One weight set reaches both corners at unequal quality.

\begin{table}[htbp]
\centering\small
\caption{CIFAR-10 FID over 50K samples for the models in
Figure~\ref{fig:corners}. Rows specify the training distribution and
columns the decoding schedule; the two lattice distributions appear
above the two single-schedule controls. On a sloped path each patch starts its 128 transitions a fixed number of steps after the patch
before it, the slope $s$, and the path takes $128 + 15s$ steps. Each column labelled spread uses
the sloped path at the indicated slope, with patches selected by the spread rule in place of
raster order. The first lattice distribution draws each
coordinate's level independently, reaching a corner with vanishing
probability. The second assigns 25\% of its mass to each corner; its row at 120K steps continues
the same model on fresh data.}
\label{tab:pixelcorners}
\footnotesize
\setlength{\tabcolsep}{2.6pt}
\begin{tabular}{lccccccc}
\toprule
& & & \multicolumn{2}{c}{slope 4} & \multicolumn{2}{c}{slope 16} & \\
\cmidrule(lr){4-5}\cmidrule(lr){6-7}
& sync. & slope 1 & raster & spread & raster & spread & seq. \\
steps & 128 & 143 & 188 & 188 & 368 & 368 & 2063 \\
\midrule
iid levels & 15.8 & 15.5 & 15.2 & 14.0 & 13.1 & 12.0 & 13.0 \\
mixed levels & 9.8 & 10.1 & 12.2 & 11.0 & 11.6 & 12.1 & 11.9 \\
\quad at 120K steps & --- & --- & --- & --- & 11.1 & 10.5 & --- \\
synchronised only & 7.6 & 20.7 & 187.2 & 171.2 & 251.6 & 241.0 & 273.4 \\
sequential only & 299.2 & 278.6 & 242.3 & 272.4 & 64.8 & 189.9 & 40.9 \\
\bottomrule
\end{tabular}
\end{table}

\begin{table}[htbp]
\centering\small
\caption{FID-50K on ImageNet-256 against the ADM reference for 2 SiT-B/2 models trained for 400K steps at batch size 256. The models
differ only in their training distribution over levels and are sampled at guidance scale 1, using 1 seed. At the sequential corner, both
follow the same path through 16 groups of 16 patches. The final row
uses the synchronised path, updating each coordinate as often as at
the sequential corner.}
\label{tab:imagenet}
\begin{tabular}{lcc}
\toprule
& synchronised & sequential \\
steps & 249 & 256 \\
\midrule
lattice, mixed levels & 43.3 & 98.7 \\
synchronised only & 39.9 & 205.5 \\
\midrule
lattice, synchronised at 16 steps & 54.5 & --- \\
\bottomrule
\end{tabular}
\end{table}

\FloatBarrier
\subsection{Lattice points an architecture can supervise}
\label{app:readable}

A causal model computes every conditional along its path in one
forward pass because each position depends only on its prefix. A backbone that takes mask tokens as input computes coordinate
conditionals at a single lattice state per pass. Consequently, at the
same wall-clock time, the lattice model trains on orders of magnitude
fewer autoregressive targets than the dedicated causal model.

We increase the supervision density with two-stream attention. We sample an order
$\pi$ and give the backbone two streams, one carrying the true tokens
and one carrying positions, with attention masked so that a query at
rank $j$ attends to content at ranks below $j$ \citep{yang2019xlnet}. One
forward pass then supervises every conditional along the sampled path.
Under the identity order the construction reduces to autoregression
exactly.

Of the models in Figure~\ref{fig:corners}, the lattice and dedicated masked models supervise coordinate conditionals at one lattice state per pass, and at the sequential corner the lattice model's bpc is 0.06 bits lower. The causal and two-stream backbones supervise every conditional along a path and lower the bpc by about 0.1 bits more.
The two-stream backbone also evaluates the masked-diffusion bound by
using its query stream with every revealed coordinate placed before
every masked coordinate. Its masked ELBO is 0.2 bits above
the lattice model's. Dense supervision improves the sequential corner but not the masked-diffusion bound.

We compare the advances available from a lattice state under each
architecture (Table~\ref{tab:readable}). An architecture expresses an
advance when one forward pass provides the law of every advanced
coordinate. A causal mask allows width-1 advances from prefix
states. A bidirectional backbone computes coordinate conditionals at
any one lattice state per pass. Two-stream attention also accepts any
lattice state and supervises every conditional along the sampled path
in one pass. For a set of several coordinates, a product law and a
sequential chain evaluate different distributions, so we score each
architecture under the law it expresses for that advance. To compare
supervision density, we use width-1 advances, where these laws
coincide. This tests the supervision term in
Corollary~\ref{prop:notax}.

\begin{table}[tb]
\centering\small
\caption{Advances from lattice states evaluated under each architecture at matched data,
budget and size, 16 seeds with one shared conditioning pattern
per seed. A cell reports the loss under the law expressed by
its architecture: a sequential chain for the causal and two-stream rows
and a product law for the bidirectional row. We report the mean and
standard deviation over the 16 draws. An empty cell is an advance the architecture
cannot express. Variation in the interior width-1 column reflects
position-to-position differences in data entropy. The paired
excess of the two-stream value over the bidirectional value in the
interior width-1 column is 0.27$\pm$0.08 bits, positive in every draw.}
\label{tab:readable}
\begin{tabular}{lccc}
\toprule
& prefix, width 1 & interior, width 1 & interior, width 8 \\
\midrule
causal & 1.22$\pm$0.11 & --- & --- \\
bidirectional & 1.41$\pm$0.10 & 2.00$\pm$1.10 & 1.01$\pm$0.31 \\
two-stream & 1.26$\pm$0.10 & 2.26$\pm$1.13 & 1.31$\pm$0.37 \\
\bottomrule
\end{tabular}
\end{table}

\section{The Schedule on the Lattice}
\label{app:design}

\noindent\textbf{Proposition~\ref{thm:pathcost} (Exact path cost of parallel advance).}
\emph{Let $\pi = (\ell^{(0)}, \dots, \ell^{(T)})$ be a monotone schedule on
the corruption lattice with advanced sets $S_t$. Write $x_{S_t}$ for
the updated values at the destination levels $\ell^{(t+1)}_{S_t}$,
with the transition index fixed by $\pi$ and suppressed. Let $P^\pi$ be the law of the path that draws
each step from the exact joint law $p(x_{S_t} \mid \ell^{(t)})$, and
$\widetilde P^\pi$ the law of the path that draws each step from the
product $\prod_{i \in S_t} p(x_i \mid \ell^{(t)})$ of the same
coordinate conditionals. Then
$D_{\mathrm{KL}}(P^\pi \,\|\, \widetilde P^\pi) =
\sum_{t=0}^{T-1} \mathbb{E}_{P^\pi}[\mathrm{TC}(S_t \mid
\ell^{(t)})]$ with $\mathrm{TC}(S \mid \ell) = \sum_{i \in S} H(x_i \mid \ell)
- H(x_S \mid \ell)$,
and the divergence between the two laws of the finished sample is at
most the same sum.}
\begin{proof}
Both $P^\pi$ and $\widetilde P^\pi$ are Markov chains with the same
initial state $\top$ and the same sequence of level assignments $\pi$.
They differ only in the step kernel. By the chain rule for relative
entropy over a Markov chain,
\[
D_{\mathrm{KL}}\big(P^\pi \,\|\, \widetilde P^\pi\big)
= \sum_{t=0}^{T-1} \mathbb{E}_{P^\pi}\Big[
D_{\mathrm{KL}}\Big(p(x_{S_t} \mid \ell^{(t)}) \,\Big\|\,
\prod_{i \in S_t} p(x_i \mid \ell^{(t)})\Big)\Big],
\]
the inner divergence taken at the state the exact chain reaches at
step $t$. The divergence of a joint law from the product of its own
marginals is the total correlation, which gives the identity. The
finished sample is a function of the path, and relative entropy does
not increase under a function, which gives the bound. The argument applies to both channels and all levels of the coordinates in $S_t$:
the step kernel at a graded lattice point is the joint law of the
advanced values given the state, and its product form is the product
of the same conditionals. For a singleton $S_t$, the joint law equals its sole marginal, so
the total correlation is zero. Two schedules over the same $F$ enter the sum only through their
advanced sets $S_t$, the states from which they advance and their
destination levels.
\end{proof}

\noindent\textbf{Lemma~\ref{prop:pairs} (The pairwise lower bound).}
\emph{For an advanced set $S = \{i_1 < \dots < i_k\}$, at any state
and under any data distribution,
$\mathrm{TC}(S \mid \ell) \ge \sum_{j=2}^{k}
I(x_{i_j};\, x_{i_{j-1}} \mid \ell)$,
the kernel summed over consecutive members of $S$. Equality holds
exactly when, given the state, each selected coordinate is
independent of the earlier selected ones given its predecessor: a
first-order Markov reference.}
\begin{proof}
By the chain rule for mutual information,
\[
\mathrm{TC}(S \mid \ell)
= \sum_{i \in S} H(x_i \mid \ell) - H(x_S \mid \ell)
= \sum_{j=2}^{k} I\big(x_{i_j};\, x_{i_1}, \dots, x_{i_{j-1}} \,\big|\, \ell\big).
\]
Mutual information is monotone in its second argument, so the $j$th
term is at least $I(x_{i_j};\, x_{i_{j-1}} \mid \ell)$, which is the
bound. The difference in the $j$th term is
$I(x_{i_j};\, x_{i_1}, \dots, x_{i_{j-2}} \mid x_{i_{j-1}}, \ell)$.
It is nonnegative and equals zero exactly when $x_{i_j}$ is independent
of the earlier selected coordinates given its predecessor and the
state, as in the stated reference model. A first-order Markov chain satisfies this conditional-independence condition at every state. Conditioning a Markov chain on the values at a fixed
set of sites preserves the Markov property over the remaining sites,
and every selected position other than $i_{j-1}$ in the $j$th term
lies before it.
\end{proof}

\noindent\textbf{Theorem~\ref{thm:separator} (Separator schedules).}
\emph{Fix a lattice state and suppose that, given its revealed coordinates, the unresolved coordinates are Markov with respect to a graph $G$, and that at this and every later state any two unresolved coordinates joined by a path of unresolved coordinates are dependent. Then the fewest steps of a schedule with zero dependence cost equal the treedepth $\mathrm{td}(G)$ of $G$.}
\begin{proof}
Condition on the revealed coordinates. The unresolved coordinates are
Markov on $G$, so coordinates in distinct connected components are
independent given the revealed ones. A step that advances at most one
coordinate per component advances a set of mutually independent
coordinates, whose joint law is the product of its marginals, and its
total correlation is zero. On a chain the unresolved coordinates
form runs whose boundaries are revealed, and each run is one component.
Resolving the midpoint of every run advances one coordinate per
component, so the step costs zero, and it splits a run of length $g$
into two runs of length at most $\lfloor g/2 \rfloor$. A run of length
$d$ is resolved after $\lceil \log_2 (d+1) \rceil$ such steps. 

For the number of steps, let $f(G)$ be the least number of zero-cost steps that resolve every coordinate of $G$. A coordinate leaves the graph only when it is resolved: revealing it separates its neighbours, while a coordinate at an interior level stays unresolved and leaves its component connected. A single coordinate needs one step. On a disconnected graph the components are independent given the revealed coordinates and are resolved side by side, so $f(G) = \max_C f(C)$ over the components $C$. On a connected graph a step that advances only one coordinate $v$, to the clean level, costs zero and leaves $G - v$, so $f(G) \le 1 + \min_v f(G - v)$. These recursions define the treedepth \citep{nesetril2006treedepth}, and by induction on the number of coordinates $f(G) \le \mathrm{td}(G)$, which needs only the Markov property. Under the dependence assumption, a step that resolves two coordinates of one component has a joint law that is not the product of its marginals and a positive total correlation. A step that moves coordinates only to interior levels leaves all of them unresolved and, by the same assumption, leaves every component connected. A zero-cost step on a connected graph therefore resolves at most one coordinate, $f(G) \ge 1 + \min_v f(G - v)$, and $f(G) = \mathrm{td}(G)$; every schedule with fewer steps contains a step of positive cost. On a path of $d$ coordinates $\mathrm{td} = \lceil \log_2 (d+1) \rceil$ \citep{nesetril2012sparsity}, which the midpoint schedule attains as shown above. On the $n \times n$ grid the treedepth exceeds the treewidth \citep{nesetril2012sparsity}, which is $n$ \citep{bodlaender1998arboretum}. Deleting the middle line across the longer side of each remaining rectangle and recursing on the two halves, as in nested dissection \citep{george1973nested}, resolves the grid with zero cost, which gives the upper bound: 46 steps for $n = 16$, about $3n$ in general.
\end{proof}

On text, a step that advances one coordinate per run pays the conditional total correlation of Proposition~\ref{thm:pathcost} across the separators. Appendix~\ref{app:separation} measures it through the separator and midpoint rules, which advance one coordinate per run whenever the runs allow.

\noindent\textbf{Corollary~\ref{cor:island} (The reference model).}
\emph{On the reference model of Lemma~\ref{prop:pairs}, a first-order
Markov chain at every state, a step of $k$ coordinates with consecutive
gaps at least $s$, where $\kappa \le \varepsilon$ at distances $\ge s$, pays at
most $(k-1)\,\varepsilon$. A schedule whose first step advances
$S_1 = \{i_1 < \dots < i_m\}$ and whose every later step advances the
midpoint of every run of masked coordinates pays
$\mathrm{TC}(S_1 \mid \top) = \sum_{j=2}^{m} \kappa(i_{j-1}, i_j)$ and
completes within budget $T$ when every gap of $\{0\} \cup S_1 \cup \{d+1\}$ is at most $2^{\,T-1}$.}
\begin{proof}
A first-order Markov chain is Markov on the chain graph, so a step
whose coordinates are pairwise separated by revealed coordinates
advances one coordinate per component and has zero cost, since coordinates in different components are independent given the revealed ones (Section~\ref{sec:optimal}). For coordinates in one run, the bound of Lemma~\ref{prop:pairs} is an equality on the reference model, so the
cost is the sum of the $k-1$ kernel terms. When every gap is at least
$s$, each term is at most $\varepsilon$. Lemma~\ref{prop:pairs} gives the first step's cost at the empty
state, and the separation just shown gives zero cost at every later
step. Once its boundaries are revealed, a gap of size $g$ between consecutive members of
$\{0\} \cup S_1 \cup \{d+1\}$, whose two added positions stand for the ends of the chain, is
resolved by repeatedly selecting its midpoint, in
$\lceil \log_2 g \rceil$ further steps with at most one of its
coordinates selected per step.
\end{proof}

\subsection{The text8 kernel and the pairwise estimates}
\label{app:rates}

We tabulate the kernel of Figure~\ref{fig:kernel}a (Table~\ref{tab:kernel}) and measure it over the mask rates encountered during decoding (Table~\ref{tab:kernelrates}) to compute the estimates in Table~\ref{tab:predicted}.

Prediction requires a schedule and a kernel. We test whether the
kernel must come from the model that will use the schedule by
computing the estimates with the kernel of our 26M text8 model and comparing them
with the costs measured on the 85M model
(Table~\ref{tab:smallkernel}). The smaller model predicts the larger
model's ranking at every budget. Here the smaller model's kernel would have chosen the
larger model's schedule before the larger model was trained. We also evaluate each
rule's sequential chain on the 85M model (Table~\ref{tab:chains})
and add the triplet term to the pairwise estimate
(Table~\ref{tab:secondorder}).

\begin{table}[htbp]
\centering\small
\caption{The pair kernel $\kappa$: the decrease in entropy at
one masked coordinate when a second coordinate at the given distance
is revealed. Values are in bits at mask rate 0.5, mean $\pm$ standard deviation over 3 seeds with 512 pairs per cell. A cell is left blank when too few pairs are
available.}
\label{tab:kernel}
\begin{tabular}{lccccc}
\toprule
Distance & 1 & 2 & 4 & 8 & 16 \\
\midrule
inside one word & 0.996 $\pm$ 0.050 & 0.694 $\pm$ 0.114
  & 0.505 $\pm$ 0.030 & 0.187 $\pm$ 0.021 & --- \\
across a boundary & 0.368 $\pm$ 0.086 & 0.269 $\pm$ 0.077
  & 0.121 $\pm$ 0.058 & 0.023 $\pm$ 0.032 & 0.009 $\pm$ 0.005 \\
\bottomrule
\end{tabular}
\end{table}

Across the mask rates encountered during decoding, the spread rule's advanced sets have lower total correlation than the random rule's at every width, by an order of magnitude at width 8 and a factor of 3 to 5 at width 32 (Table~\ref{tab:rates}).

\begin{table}[htbp]
\centering\small
\caption{Total correlation of the advanced set at 3 mask rates,
in $10^{-3}$ bits per coordinate over 3 seeds.}
\label{tab:rates}
\begin{tabular}{llccc}
\toprule
Rule & mask rate & width 8 & width 16 & width 32 \\
\midrule
random & 0.3 & 26.4 $\pm$ 12.5 & 63.4 $\pm$ 9.8 & 119.7 $\pm$ 1.0 \\
random & 0.5 & 73.1 $\pm$ 7.7 & 144.6 $\pm$ 12.8 & 268.1 $\pm$ 18.6 \\
random & 0.7 & 102.7 $\pm$ 18.6 & 228.7 $\pm$ 9.2 & 420.6 $\pm$ 20.0 \\
\midrule
spread & 0.3 & 2.7 $\pm$ 3.2 & 6.7 $\pm$ 2.5 & 25.0 $\pm$ 3.3 \\
spread & 0.5 & 4.4 $\pm$ 1.0 & 13.1 $\pm$ 1.1 & 68.6 $\pm$ 10.6 \\
spread & 0.7 & 6.1 $\pm$ 2.3 & 27.4 $\pm$ 8.6 & 130.7 $\pm$ 2.7 \\
\bottomrule
\end{tabular}
\end{table}

\begin{table}[htbp]
\centering\small
\caption{The pair kernel in Table~\ref{tab:kernel} at 5 mask rates, in bits, averaged over 3 seeds. At each distance, we weight
pairs inside a word and pairs across a boundary by their proportions
in the corpus. To predict the costs in Table~\ref{tab:predicted},
we evaluate the kernel at each step's mask rate, interpolating linearly
between measured rates and using the nearest value outside their
range.}
\label{tab:kernelrates}
\begin{tabular}{lcccc}
\toprule
Mask rate & $d=1$ & $d=2$ & $d=4$ & $d=8$ \\
\midrule
0.1 & 0.216 & 0.115 & 0.020 & 0.003 \\
0.2 & 0.298 & 0.162 & 0.051 & 0.000 \\
0.3 & 0.425 & 0.289 & 0.093 & 0.013 \\
0.5 & 0.785 & 0.480 & 0.214 & 0.030 \\
0.7 & 0.936 & 0.646 & 0.331 & 0.118 \\
\bottomrule
\end{tabular}
\end{table}

\begin{table}[htbp]
\centering\small
\caption{Pairwise contribution estimated from the 26M model's
kernel for the schedules and mask rates in
Table~\ref{tab:predicted}, compared with the 85M model's measured
excess over each schedule's sequential chain. Values are in bits per
character.}
\label{tab:smallkernel}
\begin{tabular}{lccccc}
\toprule
predicted / measured & 2 steps & 4 steps & 8 steps & 16 steps & 32 steps \\
\midrule
contiguous & 0.92 / 2.64 & 0.80 / 2.34 & 0.71 / 1.82 & 0.59 / 1.12 & 0.39 / 0.44 \\
random & 0.58 / 1.09 & 0.33 / 0.50 & 0.16 / 0.23 & 0.07 / 0.10 & 0.02 / 0.03 \\
spread & 0.31 / 0.78 & 0.11 / 0.27 & 0.07 / 0.09 & 0.04 / 0.03 & 0.01 / 0.01 \\
separator & 0.31 / 0.78 & 0.08 / 0.26 & 0.01 / 0.06 & 0.01 / 0.02 & 0.00 / 0.00 \\
\bottomrule
\end{tabular}
\end{table}

\begin{table}[htbp]
\centering\small
\caption{Bits per character of each rule's sequential chain on
the 85M model, for the decodes in Table~\ref{tab:predicted}. We reveal
the coordinates in each selected set one at a time, following the
schedule's order. The entropy of the text does not depend on this
order; differences between rows arise from the order-dependent error
of the trained conditionals.}
\label{tab:chains}
\begin{tabular}{lccccc}
\toprule
Steps & 2 & 4 & 8 & 16 & 32 \\
\midrule
contiguous & 1.32 & 1.32 & 1.32 & 1.32 & 1.32 \\
confidence & 1.40 & 1.38 & 1.35 & 1.44 & 1.48 \\
random & 1.51 & 1.55 & 1.57 & 1.57 & 1.58 \\
spread & 1.53 & 1.57 & 1.56 & 1.52 & 1.48 \\
separator & 1.53 & 1.57 & 1.56 & 1.50 & 1.38 \\
\bottomrule
\end{tabular}
\end{table}

\begin{table}[htbp]
\centering\small
\caption{The second-order estimate and measured excess, in bits per
character under the convention of Table~\ref{tab:predicted}. We add the
pairwise estimate in that table to the triplet term
$I(x_c;\,x_a \mid x_b,\ell)$ of the proof of Lemma~\ref{prop:pairs},
estimated on consecutive triples of the same schedule with no revealed coordinate between them, each step at
its own mask rate, with 512 triples per cell over 3 seeds. For confidence selection, we compute the estimate after decoding from the sets selected during decoding. We compare it with the measured excess bpc over the sequential chain that reveals those same coordinates in the same order.}
\label{tab:secondorder}
\setlength{\tabcolsep}{5.5pt}
\begin{tabular}{lccccc}
\toprule
second-order estimate / measured & 2 steps & 4 steps & 8 steps & 16 steps & 32 steps \\
\midrule
contiguous & 1.28 / 2.64 & 1.07 / 2.34 & 0.90 / 1.82 & 0.69 / 1.12 & 0.33 / 0.44 \\
random & 0.76 / 1.09 & 0.39 / 0.50 & 0.17 / 0.23 & 0.07 / 0.10 & 0.02 / 0.03 \\
spread & 0.48 / 0.78 & 0.12 / 0.27 & 0.06 / 0.09 & 0.03 / 0.03 & 0.00 / 0.01 \\
separator & 0.48 / 0.78 & 0.10 / 0.26 & 0.02 / 0.06 & 0.01 / 0.02 & 0.00 / 0.00 \\
\midrule
confidence, measured after the decode & 1.12 / 2.13 & 0.94 / 1.96 & 0.81 / 1.60 & 0.42 / 0.74 & 0.16 / 0.28 \\
\bottomrule
\end{tabular}
\end{table}

\subsection{Separation on text}
\label{app:separation}

After its first step, the midpoint rule advances at most one coordinate per masked run, which
leaves a revealed coordinate between every pair it advances. The separator rule does the same except
when a step needs more coordinates than there are runs, which after the first step happens in the
second step at 8 steps and in every step at 16. The first-order Markov reference of Lemma~\ref{prop:pairs}
charges zero to a pair with a revealed coordinate between them and gives the two rules
estimates within 0.02 bits. On text8, the midpoint rule's measured excess is at least 4 times the separator rule's from 8 steps on (Table~\ref{tab:separation}).

The two rules place their coordinates differently. Most pairs of consecutive coordinates in a
midpoint step sit 2 or 4 positions apart, with a single revealed character between them, whereas the
separator rule places its coordinates as far apart as the runs allow. We read the kernel with the
characters between a pair controlled: exactly $s$ of them are revealed, and the rest of the window
is masked at the step's rate. Within a word, one revealed character between two masked characters
leaves more than 50\% of their dependence at every distance we read
(Table~\ref{tab:betweenkernel}). On text, a single revealed character therefore does not separate
its neighbours in the sense of Theorem~\ref{thm:separator}.

Charging every pair the kernel at its distance and its number of revealed characters between, in
place of the reference's zero, gives the midpoint rule most of its measured excess and keeps the
separator rule's estimate within 0.01 bits of its measured excess
(Table~\ref{tab:separation}). We also computed this estimate before decoding a third rule, which
advances the midpoints of the longest runs and spreads its choices among runs of equal length. The
estimate placed that rule's excess below the midpoint rule's at 8, 16 and 32 steps, as its decode
then measured.

\begin{table}[htbp]
\centering\small
\caption{Text8 kernel of the 85M model at mask rate 0.5, with $s$ of the characters between
the pair revealed and the rest of the window masked at that rate: entropy decrease in bits when the
second coordinate is revealed, mean over 3 seeds with 512 pairs per cell and seed.}
\label{tab:betweenkernel}
\begin{tabular}{lcccc}
\toprule
Distance, pair & $s=0$ & $s=1$ & $s=2$ & $s=3$ \\
\midrule
2, within a word & 0.83 & 0.51 & --- & --- \\
2, across a boundary & 0.34 & 0.17 & --- & --- \\
3, within a word & 0.81 & 0.60 & 0.36 & --- \\
3, across a boundary & 0.32 & 0.18 & 0.08 & --- \\
4, within a word & 0.69 & 0.66 & 0.43 & 0.21 \\
4, across a boundary & 0.25 & 0.16 & 0.12 & 0.04 \\
\bottomrule
\end{tabular}
\end{table}

\begin{table}[htbp]
\centering\small
\caption{Excess bpc over each schedule's own sequential chain on the 85M model under the protocol
of Table~\ref{tab:predicted}, for three rules that advance at most one coordinate per masked run whenever the runs allow,
beside 2 estimates from kernel reads like those of Table~\ref{tab:betweenkernel} at mask rates
0.1 to 0.7, each step at its own rate. The first-order reference charges a pair of advanced coordinates the kernel at its distance when they share a run and zero when a revealed character lies between them; the estimate with the state between charges every pair the kernel at its distance and
its number of revealed characters between. The third rule advances the midpoints of the longest
runs and spreads its choices among runs of equal length; its estimate was computed before its
decode.}
\label{tab:separation}
\begin{tabular}{llccc}
\toprule
Rule & & 8 steps & 16 steps & 32 steps \\
\midrule
separator & first-order reference & 0.01 & 0.01 & 0.00 \\
 & with the state between & 0.06 & 0.01 & 0.00 \\
 & measured & 0.06 & 0.02 & 0.00 \\
\midrule
midpoint & first-order reference & 0.01 & 0.00 & 0.00 \\
 & with the state between & 0.18 & 0.14 & 0.09 \\
 & measured & 0.26 & 0.19 & 0.09 \\
\midrule
midpoint, ties spread & first-order reference & 0.01 & 0.00 & 0.00 \\
 & with the state between & 0.05 & 0.00 & 0.00 \\
 & measured & 0.07 & 0.01 & 0.00 \\
\bottomrule
\end{tabular}
\end{table}

\subsection{Text8 decoding from 2 to 64 steps}
\label{app:frontier}

The spread and dilated rules are tied in bpc through 8 steps (Figure~\ref{fig:fronts}, Table~\ref{tab:methods}) but diverge from 16 steps onward. The dilated order is fixed before decoding, so its sequential chain is its 64-step reading at every budget, and from 16 steps its bpc lies within 0.01 bits of that chain. The spread rule's chains are lower (Table~\ref{tab:chains}), which gives it the lower bpc although its dependence cost is larger. Table~\ref{tab:frontier} extends these readings to 2 and 4 steps, with standard deviations over sampling seeds.

\begin{table}[htbp]
\centering\small
\caption{Bits per character on the last 64 characters of held-out
windows, teacher forced after a revealed 192-character prefix, using
one weight set over full sequences. Decoding is deterministic except
for the random rule, whose selections vary across the 3 sampling seeds. The 64-step column evaluates the sequential chain.}
\label{tab:frontier}
\setlength{\tabcolsep}{3pt}
\begin{tabular}{lcccccc}
\toprule
Steps & 64 & 32 & 16 & 8 & 4 & 2 \\
\midrule
spread & 1.32 $\pm$ 0.00 & 1.49 $\pm$ 0.00 & 1.56 $\pm$ 0.00 & 1.65 $\pm$ 0.00 & 1.84 $\pm$ 0.00 & 2.31 $\pm$ 0.00 \\
random & 1.58 $\pm$ 0.00 & 1.62 $\pm$ 0.01 & 1.67 $\pm$ 0.01 & 1.80 $\pm$ 0.00 & 2.05 $\pm$ 0.01 & 2.60 $\pm$ 0.01 \\
contiguous & 1.32 $\pm$ 0.00 & 1.77 $\pm$ 0.00 & 2.44 $\pm$ 0.00 & 3.14 $\pm$ 0.00 & 3.66 $\pm$ 0.00 & 3.96 $\pm$ 0.00 \\
\bottomrule
\end{tabular}
\end{table}

\subsection{Orders and kernels on MAR-B}
\label{app:marinfo}

On MAR-B's $16\times16$ token grid a rule chooses only the permutation that MAR reveals, in chunks set by its cosine schedule. The midpoint rule orders the tokens farthest point first: each next token is the one farthest in Chebyshev distance from those already revealed, with ties broken at random. The low-discrepancy rule takes the cells in the order a two-dimensional Halton sequence in bases 2 and 3 first reaches them. The nested rule takes the middle row and column first, then the middle row and column of each part they leave, as in nested dissection \citep{george1973nested}. The spread rule follows the bit-reversed Morton order, and spread then raster takes the first 64 tokens of the spread order and the rest row by row. MAR reveals a permutation from its end, and we reverse the midpoint, low-discrepancy, nested and spread-then-raster orders, which are built first-revealed-first, before MAR reads them.

The kernel of Figure~\ref{fig:kernel}b measures how far a masked token's prediction moves when a
revealed neighbour is perturbed. MAR-B's diffusion head also gives each masked token's law, and the
Kullback--Leibler divergence between that law with and without one more revealed neighbour
measures the information the neighbour carries. Before reading it, we registered that the floors
built from the two kernels would order the rules alike, counting a reversal only where the displacement floors differ by more than 10\%. The information kernel falls much faster with
distance than the displacement kernel (Table~\ref{tab:marinfo}). The two sets of floors order the
rules alike at 8, 16, 32 and 64 steps, except low-discrepancy and nested, whose displacement floors
lie within 3\% of each other. Against the measured FIDs at 8 steps, both place nested well below
random and spread below low-discrepancy, where the measurements put nested above random and
low-discrepancy lowest. The information floor puts midpoint about 15\% below random and the displacement floor about 7\%, against a measured 18\%.

\subsection{Confidence under a minimum distance}

Imposing a minimum distance excludes the closest pairs from
confidence selection. Table~\ref{tab:conf} compares the resulting positions, pairwise estimates and measured bpc. The min-distance rule's pairwise estimate is within 0.01 bits of the spread rule's, yet its bpc remains close to that of the random rule.

\begin{table}[htbp]
\centering\small
\caption{Selection rules at 16 steps over the 64
decoded characters, width 4, using the evaluation protocol of Table~\ref{tab:frontier}. Separation is the mean gap between
consecutive selected positions within a step, 16.0 for an even grid
over the decoded region. The last column is the pairwise estimate of Table~\ref{tab:predicted} on each rule's own sets, in bits per character. Min-distance keeps the positions of a step at least 8 apart.}
\label{tab:conf}
\begin{tabular}{lccc}
\toprule
Selection rule & bpc & separation & pairwise estimate \\
\midrule
confidence & 2.18 & 2.0 & 0.34 \\
min-distance & 1.66 & 11.7 & 0.02 \\
random & 1.67 & 13.0 & 0.06 \\
spread & 1.56 & 16.0 & 0.03 \\
\bottomrule
\end{tabular}
\end{table}

\subsection{Width profiles}
\label{app:families}

For a fixed selection rule and step budget, the width profile
specifies how many coordinates advance at each step. A kernel estimated only as a function of distance assigns the same cost to all contiguous profiles. Every gap between consecutive coordinates selected in a step is 1, and every profile has the same total number of these gaps. Consistent with the distance-only estimate, measured bpc differs by only a few hundredths across contiguous profiles at 8 steps and across the polynomial profiles at 16 steps (Table~\ref{tab:shapes}). The geometric profiles at 16 steps
have bpc up to 0.2 bits higher. Their widest step advances 33\% of the sequence and so includes dependence beyond pairs. Under the spread rule, the uniform profile has the lowest bpc because the wider steps in
geometric profiles reduce the separation between selected coordinates.

\begin{table}[htbp]
\centering\small
\caption{Width profiles compared at matched step counts. We
report bits per character for complete decodes using one weight set,
deterministically evaluated on fixed windows. Every profile
distributes the same total number of coordinate advances over the
same number of steps. Wide-first and wide-late use $w_t
\propto (t{+}1)^{\mp 1}$; the geometric profiles scale width by $2/3$
or $3/2$ per step.}
\label{tab:shapes}
\begin{tabular}{lcccc}
\toprule
& contiguous 8 & contiguous 16 & spread 8 & spread 16 \\
\midrule
uniform & 3.96 & 3.70 & 1.74 & 1.64 \\
wide first & 3.94 & 3.69 & 1.92 & 1.81 \\
wide late & 3.97 & 3.75 & 1.79 & 1.69 \\
geometric, wide first & 3.94 & 3.84 & 1.95 & 1.91 \\
geometric, wide late & 3.99 & 3.89 & 1.86 & 1.85 \\
\bottomrule
\end{tabular}
\end{table}

\subsection{The graded channel}
\label{app:graded}

Proposition~\ref{thm:pathcost} holds for every set of levels $L$. We
define the graded channel as the token channel with one interior level,
at which coordinate $i$ shows a function $b_i = g(x_i)$ of its value.

Let a set $S$ be revealed in two stages, first $b_S$ and then $x_S$. The
chain rule gives
\begin{equation}
\mathrm{TC}(b_S \mid \ell) + \mathrm{TC}(x_S \mid b_S, \ell)
\;=\; \mathrm{TC}(x_S \mid \ell) \;-\; \sum_{i \in S}
\Big[ H(x_i \mid b_i, \ell) - H(x_i \mid b_S, \ell) \Big].
\label{eq:twostage}
\end{equation}
The reduction in cost equals the information that the other
coordinates' interior values provide about $x_i$ beyond its own
interior value $b_i$. Revealing every coordinate in two stages requires
twice as many coordinate advances. At the same step budget, the
steps must therefore be twice as wide. The graded channel reduces
the cost at matched steps when the saving in \eqref{eq:twostage}
exceeds the cost of this increase in width.

The text8 channel uses 3 character classes as its interior level:
space, vowel and consonant. The classes are fixed before training, and
their 3 symbols are added to the vocabulary. Both models draw
every coordinate's level uniformly and independently, the graded model
among mask, class and character and the absorbing model between mask
and character, and score the character of every coordinate not yet
revealed, weighted by the reciprocal of the unrevealed fraction of its
row.

We estimate the bound for each stage from the graded model's kernels
(Table~\ref{tab:gradedkernel}). For both models, the excess bpc over
their own sequential chains exceeds the corresponding pairwise estimate at every budget (Table~\ref{tab:graded}). From 32 steps, the two models' excesses lie within 0.01 bits of each other, and the graded model's higher bpc comes from its sequential chain.

\begin{table}[htbp]
\centering\small
\caption{Sequential-chain bpc, excess bpc from parallel decoding,
and the pairwise estimate for the two models on the schedules in
Table~\ref{tab:partial}. For each model, the chain follows its own
schedule, revealing the coordinates of each set one at a time. The
excess is the difference between the parallel decode and this chain.
The estimate is computed from the model's own kernels
(Table~\ref{tab:gradedkernel}).}
\label{tab:graded}
\begin{tabular}{lcccc}
\toprule
Steps & 8 & 16 & 32 & 64 \\
\midrule
chain, absorbing & 1.586 & 1.564 & 1.545 & 1.542 \\
chain, graded & 1.628 & 1.644 & 1.683 & 1.688 \\
excess, absorbing & 0.105 & 0.036 & 0.039 & 0.012 \\
excess, graded & 0.162 & 0.091 & 0.033 & 0.015 \\
estimate, absorbing & 0.029 & 0.014 & 0.011 & 0.004 \\
estimate, graded (class + character) & 0.057 & 0.013 & 0.008 & 0.007 \\
\bottomrule
\end{tabular}
\end{table}

\begin{table}[htbp]
\centering\small
\caption{Pair kernels of the two models at mask rate 0.7,
following the convention of Figure~\ref{fig:kernel}. Entries give
bits for a pair within one word / a pair across a boundary, with
512 pairs per cell. We report the absorbing model's character kernel
and the graded model's class kernel and character kernel conditioned
on the classes.}
\label{tab:gradedkernel}
\begin{tabular}{lcccc}
\toprule
Distance & 1 & 2 & 4 & 8 \\
\midrule
absorbing, character & 1.172 / 0.797 & 0.900 / 0.446 & 0.609 / 0.315 & 0.374 / 0.112 \\
graded, class & 0.177 / 0.283 & 0.054 / 0.127 & 0.019 / 0.080 & 0.021 / 0.008 \\
graded, character given classes & 0.494 / 0.006 & 0.560 / 0.016 & 0.455 / 0.054 & 0.262 / 0.026 \\
\bottomrule
\end{tabular}
\end{table}

\subsection{Free generation on text8}
\label{app:freegen}

In the teacher-forced evaluation of Section~\ref{sec:frontier}, we
score the true characters at the selected positions and reveal them
before the next step, so sampling errors cannot affect subsequent
conditionals. For free generation, we start from the empty
state and sample the model at the positions selected in each step.
We score the completed text with the separately trained autoregressive
model and measure unigram divergence from the corpus to check whether
a low scorer loss accompanies degenerate text.

The selection rules retain their ranking at every parallel width in
free generation. Rules that select clustered coordinates also have the largest unigram divergence from the corpus (Table~\ref{tab:freegen}). Word-aware spread reads above spread at every budget. At the sequential limit, where the rules differ only in coordinate order, contiguous order gives the lowest scorer loss.

\begin{table}[htbp]
\centering\small
\caption{Free generation at matched steps: bits per character of the
sampled text under the dedicated autoregressive scorer over 3 sampling seeds. The last column is the unigram divergence against the corpus at 16 steps. Word-aware spread is the separator rule with words as the segments, at most one coordinate per word until every word has one, with word boundaries read from the spaces decoded so far and the model's own space predictions.}
\label{tab:freegen}
\begin{tabular}{lccccc}
\toprule
Rule & 8 & 16 & 64 & 256 & unigram KL \\
\midrule
contiguous & 6.35 $\pm$ .01 & 6.11 $\pm$ .01 & 5.38 $\pm$ .03
  & 1.42 $\pm$ .02 & .053 $\pm$ .001 \\
confidence & 6.01 $\pm$ .02 & 5.91 $\pm$ .01 & 4.69 $\pm$ .02
  & 1.82 $\pm$ .01 & .048 $\pm$ .001 \\
random & 3.81 $\pm$ .02 & 3.08 $\pm$ .02 & 2.41 $\pm$ .01
  & 2.17 $\pm$ .01 & .001 $\pm$ .000 \\
spread & 2.60 $\pm$ .01 & 2.16 $\pm$ .02 & 1.98 $\pm$ .02
  & 1.46 $\pm$ .01 & .002 $\pm$ .000 \\
word-aware spread & 2.70 $\pm$ .01 & 2.21 $\pm$ .04 & 2.03 $\pm$ .05
  & 1.80 $\pm$ .01 & .002 $\pm$ .000 \\
\bottomrule
\end{tabular}
\end{table}

\subsection{Generative perplexity on the released language model}
\label{app:lladappl}

We evaluate the selection rules on the released language model using
generative perplexity. At every budget, its kernel gives a larger pairwise estimate for the random rule than for the spread or midpoint rule; from
32 steps onward, the difference is only a few hundredths of a bit per
token. The text8 ranking also predicts higher perplexity for confidence
than for random, and a reduction when confidence selection is
constrained to the minimum distance indicated by the kernel. At 8 and 16 steps, all these
comparisons hold: the 95\% bootstrap intervals for the corresponding log-perplexity differences
exclude zero (Table~\ref{tab:lladappl}). Min-distance has lower perplexity than plain confidence at every budget, and plain confidence has higher perplexity than random at every
budget. At 32 steps, the intervals for all pairwise differences among the random, spread and
midpoint rules include zero. At 64 steps the random rule's perplexity is below the spread rule's,
with an interval that excludes zero, although its pairwise estimate is larger by a few hundredths
of a bit per token.

Repetition lowers generative perplexity, and the min-distance rule's samples repeat more
4-token sequences than plain confidence's at every budget (Table~\ref{tab:lladappl}). We regress each sample's log perplexity on the rule, its unigram entropy and its share of
repeated 4-token sequences. With both measures held fixed, the min-distance rule's log perplexity
stays below plain confidence's at every budget, with bootstrap intervals that exclude zero, and
among the less repetitive half of the samples it keeps most of its reduction
(Table~\ref{tab:lladarep}).

\begin{table}[tb]
\centering\footnotesize
\caption{Change in log generative perplexity (nats) of the min-distance rule against plain
confidence, on the samples of Table~\ref{tab:lladappl}. Pooled: the difference in
Table~\ref{tab:llada}. Controlled: the rule's coefficient when each sample's log perplexity is
regressed on the rule, its unigram entropy and its share of repeated 4-token sequences, with a
95\% bootstrap interval over samples. Less repetitive half: the pooled difference among the samples
whose share of repeated 4-token sequences lies below the median of both rules' samples.}
\label{tab:lladarep}
\begin{tabular}{lcccc}
\toprule
Steps & 8 & 16 & 32 & 64 \\
\midrule
pooled & $-1.04$ & $-0.90$ & $-1.49$ & $-1.56$ \\
controlled & $-0.90$ & $-0.67$ & $-1.20$ & $-0.92$ \\
\quad 95\% interval & {\footnotesize[$-1.08$, $-0.71$]} & {\footnotesize[$-0.88$, $-0.47$]} & {\footnotesize[$-1.55$, $-0.90$]} & {\footnotesize[$-1.25$, $-0.59$]} \\
less repetitive half & $-0.83$ & $-0.61$ & $-1.16$ & $-1.48$ \\
\bottomrule
\end{tabular}
\end{table}

\begin{table}[tb]
\centering\scriptsize
\setlength{\tabcolsep}{3.5pt}
\caption{Generative perplexity on released LLaDA-8B-Base weights,
with only the selection rule changed in the sampler. We generate 512-token samples with nucleus
sampling at probability threshold 0.9; confidence ranks positions by the probability of the
predicted token, and GPT-2-large scores each whole sample, re-tokenised, with a sliding window.
Each cell pools 192 samples over 3 sampling seeds and reports perplexity with a 95\%
bootstrap interval over samples. Min-distance keeps the positions of a step at least 6 apart, the first distance at which the kernel on these weights falls to 10\% of its value at distance 1; when it leaves too few positions, the most confident
remaining positions fill the step. The lower blocks report the samples' unigram entropy in bits
and their share of repeated 4-token sequences in the scorer's tokens, averaged over the 192
samples.}
\label{tab:lladappl}
\begin{tabular}{@{}lcccc@{}}
\toprule
Steps & 8 & 16 & 32 & 64 \\
\midrule
random & 283 [246,323] & 147 [123,175] & 54 [47,61] & 27 [23,32] \\
confidence & 375 [331,431] & 212 [182,245] & 180 [142,239] & 97 [74,130] \\
min-distance & 133 [114,157] & 86 [71,105] & 41 [33,52] & 20 [17,24] \\
midpoint & 163 [147,182] & 94 [83,106] & 50 [44,57] & 31 [27,35] \\
spread & 168 [149,190] & 112 [98,130] & 54 [47,64] & 35 [31,41] \\
\midrule
\multicolumn{5}{@{}l}{unigram entropy, bits} \\
random & 7.70 & 7.46 & 7.16 & 7.09 \\
confidence & 7.68 & 7.49 & 7.02 & 6.16 \\
min-distance & 7.55 & 7.22 & 6.72 & 6.56 \\
midpoint & 7.64 & 7.46 & 7.31 & 7.04 \\
spread & 7.64 & 7.46 & 7.30 & 7.05 \\
\midrule
\multicolumn{5}{@{}l}{repeated 4-token sequences, \%} \\
random & 5.0 & 6.9 & 10.1 & 13.4 \\
confidence & 3.6 & 3.9 & 5.1 & 11.0 \\
min-distance & 7.3 & 10.5 & 15.6 & 22.4 \\
midpoint & 7.0 & 7.5 & 9.1 & 13.7 \\
spread & 6.1 & 6.4 & 9.8 & 13.2 \\
\bottomrule
\end{tabular}
\end{table}

\subsection{The cost of selecting}
\label{app:overhead}

Each selection rule computes positions or sorts the model's output.
On both decoders, selection takes less than 1\% of the time
for a forward pass (Table~\ref{tab:overhead}). The minimum-distance
constraint takes the longest, below 0.75\% of the
forward pass on the released model and below 0.5\% on the
text8 decoder.

\begin{table}[htbp]
\centering\small
\caption{Forward-pass and selection time per decoding step,
reported as medians over the timed steps on one L40. The device is
synchronised around each timed region. We evaluate LLaDA-8B-Base at
64 steps on the first 20 GSM8K problems in Table~\ref{tab:llada}, one
prompt at a time, and the text8 85M decoder at 16 steps on 256 windows with a revealed
192-character prefix.}
\label{tab:overhead}
\setlength{\tabcolsep}{4pt}
\begin{tabular}{lcccccc}
\toprule
& \multicolumn{3}{c}{LLaDA-8B-Base, 64 steps} & \multicolumn{3}{c}{text8 85M, 16 steps} \\
\cmidrule(lr){2-4}\cmidrule(lr){5-7}
Rule & forward, ms & select, ms & \% & forward, ms & select, ms & \% \\
\midrule
contiguous & --- & --- & --- & 211.3 & 0.09 & 0.04 \\
random & 170.8 & 0.12 & 0.07 & 212.3 & 0.13 & 0.06 \\
confidence & 170.9 & 0.28 & 0.17 & 216.9 & 0.22 & 0.10 \\
min-distance & 170.3 & 1.24 & 0.73 & 217.6 & 1.01 & 0.46 \\
separator constraint & 171.6 & 0.78 & 0.46 & --- & --- & --- \\
separator & --- & --- & --- & 214.4 & 0.97 & 0.45 \\
midpoint & 171.7 & 0.73 & 0.42 & 215.4 & 0.70 & 0.33 \\
spread & 170.3 & 0.92 & 0.54 & 213.0 & 0.63 & 0.30 \\
\bottomrule
\end{tabular}
\end{table}

\subsection{HumanEval on the released language model}
\label{app:humaneval}

We evaluate confidence and its two constrained forms on HumanEval
under the same sampler (Table~\ref{tab:humaneval}). Here and in Table~\ref{tab:llada} every prompt
is decoded on its own, since decoding prompts in left-padded batches changes the model's outputs. We compare paired outcomes on the 164 problems using McNemar's exact test. Both constrained
rules have higher pass@1 than plain confidence from 8 to 64 steps, with significant paired
differences at each of these budgets, and the two constrained rules tie at every budget. At 128 steps, where each step advances 2 coordinates, the three rules tie.
Under LLaDA's sampling at 64 steps on GSM8K, the min-distance rule has higher accuracy than plain confidence, paired over the 1,319 problems (Table~\ref{tab:humaneval}), where greedy decoding at the same budget gives a tie (Table~\ref{tab:llada}).

\begin{table}[htbp]
\centering\footnotesize\setlength{\tabcolsep}{3pt}
\caption{HumanEval pass@1 in percent on released LLaDA-8B-Base
weights, with only the selection rule changed in LLaDA's sampler.
We use greedy decoding on all 164 problems, one prompt at a time, score
completions with the original code-execution harness in a sandbox and
report binomial standard errors. Min-distance keeps the positions of a step at least 6 apart. The lower block reports GSM8K strict match at 64 steps on the 1,319
problems, using LLaDA's sampling procedure in place of greedy decoding; the paired difference is $4.5 \pm 1.6$ points. Under plain confidence at 256 steps, the same harness gives 71.4 $\pm$ 1.2 on GSM8K, against the 70.0 that LLaDA's repository reports at this setting.}
\label{tab:humaneval}
\begin{tabular}{lccccc}
\toprule
Steps & 8 & 16 & 32 & 64 & 128 \\
\midrule
confidence & 1.8 $\pm$ 1.0 & 2.4 $\pm$ 1.2 & 5.5 $\pm$ 1.8 & 11.0 $\pm$ 2.4 & 26.2 $\pm$ 3.4 \\
min-distance & 7.9 $\pm$ 2.1 & 9.1 $\pm$ 2.3 & 12.2 $\pm$ 2.6 & 17.7 $\pm$ 3.0 & 26.8 $\pm$ 3.5 \\
separator constraint & 7.9 $\pm$ 2.1 & 8.5 $\pm$ 2.2 & 13.4 $\pm$ 2.7 & 18.3 $\pm$ 3.0 & 25.6 $\pm$ 3.4 \\
\midrule
\multicolumn{6}{@{}l}{GSM8K, sampling, 64 steps} \\
confidence & \multicolumn{5}{c}{41.3 $\pm$ 1.4} \\
min-distance & \multicolumn{5}{c}{45.8 $\pm$ 1.4} \\
\bottomrule
\end{tabular}
\end{table}

\section{The Frame Axis on Released Video Weights}
\label{sec:released}

\begin{table}[!ht]
\centering\footnotesize\setlength{\tabcolsep}{3pt}
\caption{Memory depth and decoding steps on the released SkyReels-V2 checkpoint \citep{skyreels2025} at 8 times its training horizon, in the 4 VBench dimensions declared before decoding, measured over the entire 773-frame
rollout with every frame denoised. The shipped row gives the mean over 3 prompts; the other rows give the paired
difference from it over the same prompts, with its paired standard error.}
\label{tab:videoall}
\begin{tabular}{lccccc}
\toprule
 & & subject & background & motion & imaging \\
Memory depth & steps & consistency & consistency & smoothness & quality \\
\midrule
17 frames, as shipped & 300 & 0.8998 & 0.9505 & 0.9937 & 65.70 \\
5 frames & 270 & $-0.0087 \pm 0.0206$ & $-0.0053 \pm 0.0047$ & $-0.0015 \pm 0.0013$ & $-3.49 \pm 2.44$ \\
25 frames & 330 & $-0.0068 \pm 0.0130$ & $-0.0033 \pm 0.0120$ & $+0.0000 \pm 0.0004$ & $-3.50 \pm 1.43$ \\
33 frames & 360 & $-0.0109 \pm 0.0128$ & $-0.0007 \pm 0.0062$ & $+0.0001 \pm 0.0006$ & $-0.34 \pm 1.41$ \\
57 frames & 540 & $-0.0278 \pm 0.0181$ & $-0.0190 \pm 0.0026$ & $+0.0009 \pm 0.0005$ & $-5.15 \pm 3.15$ \\
\bottomrule
\end{tabular}
\end{table}

We also compare schedules on released video weights using the kernel.
A checkpoint from the Diffusion Forcing family supports an independent
level per frame, allowing schedules to vary the memory depth: the
number of earlier frames on which each segment conditions
\citep{chen2024diffusionforcing, skyreels2025}. Before decoding, we
estimate the kernel between frames on these weights. It falls to about 60\% of its value
between adjacent frames within 6 latent frames and changes little beyond, whereas the token kernel in Figure~\ref{fig:kernel} falls to near zero by distance 16. The plateau is dependence that every earlier frame shares with the segment, which the nearest frames already carry. The pairwise estimate therefore predicts that greater memory depth keeps quality the same while requiring more steps. Table~\ref{tab:videoall} reports the 4 VBench dimensions we declared before decoding. In subject consistency, every other setting lies below the shipped one, within 2 paired standard errors. We decode with the synchronised schedule and causal blocks of 1 frame, so that each segment conditions on
exactly the frames its row states, and every frame is denoised before the rollout is scored. With
blocks of 5 frames the released code truncates each segment's prefix to whole blocks, and the
5-, 25- and 33-frame depths would condition on 0, 17 and 17 frames. We
compute the step count directly from the schedule and score the entire rollout so that the
evaluation includes discontinuities between segments, which separate scoring of each segment
would miss. On every dimension, every setting scores at most 2 paired standard errors above the shipped one, and 2 cells score more than 2 below it: background consistency at 57
frames and imaging quality at 25 frames.

\end{document}